\documentclass{article}

\PassOptionsToPackage{numbers, compress}{natbib}
%


\usepackage[preprint]{nips_2018}

\usepackage[utf8]{inputenc} 
\usepackage[T1]{fontenc}    
\usepackage[bookmarks=false,pdfstartview={FitH}]{hyperref}       
\usepackage{url}            
\usepackage{booktabs}       
\usepackage{amsfonts}       
\usepackage{nicefrac}       
\usepackage{microtype}      
\usepackage{graphicx}
\usepackage{times} 
\usepackage{amsmath} 
\usepackage{amsthm} 
\usepackage{amssymb}
\usepackage{algorithmic}  
\usepackage{algorithm} 
\usepackage{epsfig} 
\usepackage{epstopdf}
\usepackage{multirow} 
\usepackage{subfigure} 
\usepackage{caption} 
\usepackage{lipsum} 
\usepackage{xcolor} 
\usepackage{bbm}
\usepackage{enumitem}

\newtheorem{theorem}{\bf{Theorem}}

\title{Maximum Causal Tsallis Entropy Imitation Learning}

%
\author{
   Kyungjae Lee\\
  Seoul National University\\
  \texttt{kyungjae.lee@cpslab.snu.ac.kr} \\
 \And
  Sungjoon Choi\\
  Kakao Brain\\
  \texttt{sam.choi@kakaobrain.com} \\
 \And
   Songhwai Oh\\
  Seoul National University\\
  \texttt{songhwai.oh@cpslab.snu.ac.kr} \\
}

\begin{document}

\maketitle

\begin{abstract}
In this paper, we propose a novel maximum causal Tsallis entropy
(MCTE) framework for imitation learning which can efficiently learn a
sparse multi-modal policy distribution from demonstrations. 
We provide the full mathematical analysis of the proposed framework.
First, the optimal solution of an MCTE problem is shown to be a sparsemax
distribution, whose supporting set can be adjusted. 
The proposed method has advantages over
a softmax distribution in that it can exclude unnecessary actions by
assigning zero probability.
Second, we prove that an MCTE problem is equivalent to robust Bayes estimation
 in the sense of the Brier score.
Third, we propose a maximum causal Tsallis entropy imitation learning
(MCTEIL) algorithm with a sparse mixture density network (sparse MDN)
by modeling mixture weights using a sparsemax distribution. 
In particular, we show that the causal Tsallis entropy of an MDN
encourages exploration and efficient mixture utilization while Boltzmann Gibbs entropy is less effective.
We validate the proposed method in two simulation studies and MCTEIL
outperforms existing imitation learning methods in terms of average
returns and learning multi-modal policies. 
\end{abstract}

\section{Introduction}
In this paper, we focus on the problem of imitating demonstrations of
an expert who behaves non-deterministically depending on the situation.
In imitation learning, it is often assumed that the expert's policy is
deterministic. 
However, there are instances, especially for complex tasks, where 
multiple action sequences perform the same task equally well. 
We can model such nondeterministic behavior of an expert using a
stochastic policy.
For example, expert drivers normally show consistent
behaviors such as keeping lane or keeping the distance from a frontal
car, but sometimes they show different actions for the same
situation, such as overtaking a car and turning left or right at an
intersection, as suggested in \cite{ziebart2008maximum}.
Furthermore, learning multiple optimal action sequences to perform a
task is desirable in terms of robustness since an agent can easily
recover from failure due to unexpected events \cite{Haarnoja2017, lee2018sparse}. 
In addition, a stochastic policy promotes exploration and stability
during learning \cite{Heess2012, Haarnoja2017, vamplew2017softmax}.
Hence, modeling experts' stochasticity can be a key factor in imitation
learning.

To this end, we propose a novel maximum causal Tsallis entropy (MCTE)
framework for imitation learning, which can learn from a uni-modal to 
multi-modal policy distribution by adjusting its supporting set. 
We first show that the optimal policy under the MCTE framework follows
a \textit{sparsemax} distribution \cite{martins2016softmax}, which
has an adaptable supporting set in a discrete action space.
Traditionally, the maximum causal entropy (MCE) framework
\cite{ziebart2008maximum,bloem2014infinite} has been proposed to model
stochastic behavior in demonstrations, where the optimal policy
follows a softmax distribution. 
However, it often assigns non-negligible probability mass to
non-expert actions when the number of actions increases
\cite{lee2018sparse, nachum2018path}. 
On the contrary, as the optimal policy of the proposed method can
adjust its supporting set, it can model various expert's behavior from
a uni-modal distribution to a multi-modal distribution. 

To apply the MCTE framework to a complex and model-free problem, we
propose a maximum causal Tsallis entropy imitation learning (MCTEIL)
with a sparse mixture density network (sparse MDN) whose mixture
weights are modeled as a sparsemax distribution. 
By modeling expert's behavior using a sparse MDN, MCTEIL can learn
varying stochasticity depending on the state in a continuous action
space.  
Furthermore, we show that the MCTEIL algorithm can be obtained by
extending the MCTE framework to the generative adversarial setting,
similarly to generative adversarial imitation learning (GAIL) by Ho
and Ermon \cite{ho2016generative}, which is based on the MCE
framework. 
The main benefit of the generative adversarial setting is that the
resulting policy distribution is more robust than that of a supervised
learning method since it can learn recovery behaviors from less
demonstrated regions to demonstrated regions by exploring the
state-action space during training. 
Interestingly, we also show that the Tsallis entropy of a sparse MDN
has an analytic form and is proportional to the distance between
mixture means.  
Hence, maximizing the Tsallis entropy of a sparse MDN encourages
exploration by providing bonus rewards to wide-spread mixture means and
penalizing collapsed mixture means, while the causal entropy
\cite{ziebart2008maximum} of an MDN is less effective in terms of
preventing the collapse of mixture means since there is no analytical
form and its approximation is used in practice instead.
Consequently, maximizing the Tsallis entropy of a sparse MDN has a
clear benefit over the causal entropy in terms of exploration and
mixture utilization. 

To validate the effectiveness of the proposed method, we conduct two
simulation studies. 
In the first simulation study, we verify that MCTEIL with a sparse MDN can
successfully learn multi-modal behaviors from expert's demonstrations. 
A sparse MDN efficiently learns a multi-modal policy without 
performance loss, while a single Gaussian and a softmax-based MDN
suffer from performance loss.
The second simulation study is conducted using four continuous control
problems in MuJoCo \cite{todorov2012mujoco}. 
MCTEIL outperforms existing methods in terms of the average cumulative return.
In particular, MCTEIL shows the best performance for the
\textit{reacher} problem with a smaller number of demonstrations while
GAIL often fails to learn the task.

\section{Background} \label{BG}

\paragraph{Markov Decision Processes}
Markov decision processes (MDPs) are a well-known mathematical
framework for a sequential decision making problem. 
A general MDP is defined as a tuple $\{\mathcal{S}, \mathcal{F}, \mathcal{A}, \phi, \Pi, d, T, \gamma, \mathbf{r} \}$, 
where $\mathcal{S}$ is the state space, 
$\mathcal{F}$ is the corresponding feature space, 
$\mathcal{A}$ is the action space,
$\phi$ is a feature map from $\mathcal{S} \times \mathcal{A}$ to $\mathcal{F}$,
$\Pi$ is a set of stochastic policies, i.e., $ \Pi=\{\pi ~|~ \forall s \in \mathcal{S},\; a \in \mathcal{A},\; \pi(a|s)\ge 0 \;\text{and} \; \sum_{a'}\pi(a'|s)=1\}$,
$d(s)$ is the initial state distribution,
$T(s'|s,a)$ is the transition probability from $s\in\mathcal{S}$ to $ s'\in\mathcal{S}$ by taking $a\in\mathcal{A}$,
$\gamma \in (0,1)$ is a discount factor, and
$\mathbf{r}$ is the reward function from a state-action pair to a real value.
In general, the goal of an MDP is to find the optimal policy distribution $\pi^{*} \in \Pi$ which maximizes the expected discount sum of rewards, i.e., $\mathbb{E}_{\pi}\left[\mathbf{r}(s,a)\right] \triangleq \mathbb{E}\left[\sum_{t=0}^{\infty}\mathbf{r}(s_t,a_t)\middle|\pi,d\right]$.
Note that, for any function $f(s,a)$, $\mathbb{E}\left[\sum_{t=0}^{\infty}f(s_t,a_t)\middle|\pi,d\right]$ will be denoted as $\mathbb{E}_{\pi}\left[f(s,a)\right]$.

\paragraph{Maximum Causal Entropy Inverse Reinforcement Learning}
Zeibart et al. \cite{ziebart2008maximum} proposed the maximum causal entropy framework, 
which is also known as maximum entropy inverse reinforcement learning (MaxEnt IRL).
MaxEnt IRL maximizes the causal entropy of a policy distribution
while the feature expectation of the optimized policy distribution is
matched with that of expert's policy. 
The maximum causal entropy framework is defined as follows:
\begin{eqnarray}
\begin{aligned}\label{eqn:maxent}
& \underset{\pi \in \Pi}{\text{maximize}}
& & \alpha H(\pi) \\
& \text{subject to}
& & \mathbb{E}_{\pi}\left[\phi(s,a)\right]  = \mathbb{E}_{\pi_{E}}\left[\phi(s,a)\right],
\end{aligned}
\end{eqnarray}
where $H(\pi) \triangleq  \mathbb{E}_{\pi}\left[-\log(\pi(a|s))\right]$ 
is the causal entropy of policy $\pi$, $\alpha$ is a scale parameter,
$\pi_{E}$ is the policy distribution of the expert.
Maximum casual entropy estimation finds the most uniformly distributed
policy satisfying feature matching constraints. 
The feature expectation of the expert policy is used as a statistic to
represent the behavior of an expert and is approximated from expert's
demonstrations $\mathcal{D} = \{\zeta_0,\cdots,\zeta_N\}$, where $N$
is the number of demonstrations and $\zeta_i$ is a sequence of state
and action pairs whose length is $T$, 
i.e., $\zeta_i = \{(s_0,a_0),\cdots,(s_T,a_T)\}$. 
In \cite{ziebart2010MPAs}, it is shown that the optimal solution of
(\ref{eqn:maxent}) is a softmax distribution.

\paragraph{Generative Adversarial Imitation Learning}
In \cite{ho2016generative}, Ho and Ermon have extended
(\ref{eqn:maxent}) to a unified framework for IRL by adding a
reward regularization as follows: 
\begin{eqnarray}
\begin{aligned}\label{eqn:unif_irl}
& \underset{c}{\text{max}} \; \underset{\pi \in \Pi}{\text{min}}
& & -\alpha H(\pi)+\mathbb{E}_{\pi}\left[c(s,a)\right] - \mathbb{E}_{\pi_{E}}\left[c(s,a)\right] - \psi(c),\\
\end{aligned}
\end{eqnarray}
where $c$ is a cost function and $\psi$ is a convex regularization for
cost $c$.
As shown in \cite{ho2016generative}, many existing IRL methods can be
interpreted with this framework, such as 
MaxEnt IRL \cite{ziebart2008maximum}, 
apprenticeship learning \cite{abbeel2004apprenticeship}, and 
multiplicative weights apprenticeship learning \cite{syed2008game}. 
Existing IRL methods based on (\ref{eqn:unif_irl}) often require to 
solve the inner minimization over $\pi$ for fixed $c$ in order to
compute the gradient of $c$. 
In \cite{ziebart2010MPAs}, Ziebart showed that the inner minimization
is equivalent to a soft Markov decision process (soft MDP) under the reward $-c$
and proposed soft value iteration to solve the soft MDP.
However, solving a soft MDP every iteration is often intractable for
problems with large state and action spaces and also requires
the transition probability which is not accessible in many cases. 
To address this issue, the generative adversarial imitation learning
(GAIL) framework is proposed in \cite{ho2016generative} to avoid
solving the soft MDP problem directly.
The unified imitation learning problem (\ref{eqn:unif_irl}) can be
converted into the GAIL framework as follows:
\begin{eqnarray}
\begin{aligned}
& \underset{\pi  \in \Pi}{\text{min}} \; \underset{\mathbf{D}}{\text{max}} && \mathbb{E}_{\pi}\left[\log(\mathbf{D}(s,a))\right] + \mathbb{E}_{\pi_{E}}\left[\log(1-\mathbf{D}(s,a))\right] - \alpha H(\pi),
\end{aligned}
\end{eqnarray}
where $\mathbf{D} \in (0,1)^{|\mathcal{S}||\mathcal{A}|}$ indicates a
discriminator, which returns the probability that a given
demonstration is from a learner, i.e., $1$ for learner's demonstrations
and $0$ for expert's demonstrations.
Notice that we can interpret $\log(\mathbf{D})$ as cost $c$
(or reward of $-c$).

Since existing IRL methods, including GAIL, are often based
on the maximum causal entropy, they model the expert's policy using a
softmax distribution, which can assign non-zero probability to non-expert
actions in a discrete action space. 
Furthermore, in a continuous action space, expert's behavior is often
modeled using a uni-modal Gaussian distribution, which is not proper to
model multi-modal behaviors. 
To handle these issues, we propose a sparsemax distribution as the
policy of an expert and provide a natural extension to handle a
continuous action space using a mixture density network with sparsemax
weight selection. 

\paragraph{Sparse Markov Decision Processes} 
In \cite{lee2018sparse}, a sparse Markov decision process (sparse
MDP) is proposed by utilizing the causal sparse Tsallis entropy 
$W(\pi) \triangleq \frac{1}{2}\mathbb{E}_{\pi}\left[1-\pi(a|s)\right]$
to the expected discounted rewards sum, i.e.,
$\mathbb{E}_{\pi}\left[\mathbf{r}(s,a)\right] + \alpha W(\pi)$. 
Note that $W(\pi)$ is an extension of a special case of the
generalized Tsallis entropy, i.e., 
$S_{k,q}(p) = \frac{k}{q-1} \left(1- \sum_{i} p_{i}^{q} \right)$, 
for $k=\frac{1}{2}, q=2$, to sequential random variables. 
It is shown that that the optimal policy of a sparse MDP is a sparse
and multi-modal policy distribution \cite{lee2018sparse}. 
Furthermore, sparse Bellman optimality conditions were derived as follows:
\begin{eqnarray} \label{eqn:spbellman}
\small
\begin{aligned}
Q(s,a) &\triangleq r(s,a) + \gamma \sum_{s'}V(s')T(s'|s,a),\; \pi(a|s) = \max\left(\frac{Q(s,a)}{\alpha} - \tau\left(\frac{Q(s,\cdot)}{\alpha}\right),0\right),\\
V(s) &= \alpha\left[\frac{1}{2}\sum_{a\in S(s)}\left(\left(\frac{Q(s,a)}{\alpha}\right)^{2} - \tau \left(\frac{Q(s,\cdot)}{\alpha}\right)^{2}\right) + \frac{1}{2}\right],
\end{aligned}
\end{eqnarray}
where $\tau\left(\frac{Q(s,\cdot)}{\alpha}\right) = \frac{\sum_{a\in S(s)}\frac{Q(s,a)}{\alpha} - 1}{K_s}$, $S(s)$ is a set of actions 
satisfying $1 + i\frac{Q(s,a_{(i)})}{\alpha}>\sum_{j=1}^{i}\frac{Q(s,a_{(j)})}{\alpha}$
with $a_{(i)}$  indicating the action with the $i$th largest state-action value $Q(s,a)$,
and $K_s$ is the cardinality of $S(s)$.
In \cite{lee2018sparse}, 
a sparsemax policy shows better performance compared to a softmax
policy since it assigns zero probability to non-optimal actions whose
state-action value is below the threshold $\tau$. 
In this paper, we utilize this property in imitation learning by
modeling expert's behavior using a sparsemax distribution. 
In Section \ref{sec:mcte}, we show that the optimal solution of an MCTE
problem also has a sparsemax distribution and, hence, the optimality
condition of sparse MDPs is closely related to that of MCTE problems. 

\section{Principle of Maximum Causal Tsallis Entropy} \label{sec:mcte}

In this section, we formulate maximum causal Tsallis entropy imitation
learning (MCTEIL) and show that MCTE induces a sparse and multi-modal
distribution which has an adaptable supporting set. 
The problem of maximizing the causal Tsallis entropy $W(\pi)$ can be
formulated as follows: 
\begin{eqnarray}\label{eqn:maximum_cste}
\begin{aligned}
& \underset{\pi \in \Pi}{\text{maximize}}
& & \alpha W(\pi)\\
& \text{subject to}
& & \mathbb{E}_{\pi}\left[\phi(s,a)\right]  = \mathbb{E}_{\pi_{E}}\left[\phi(s,a)\right].
\end{aligned}
\end{eqnarray}
In order to derive optimality conditions, we will first change the optimization variable from a policy
distribution to a state-action visitation measure. 
Then, we prove that the MCTE problem is concave with respect to the
visitation measure.
The necessary and sufficient conditions for an optimal solution are
derived from the Karush-Kuhn-Tucker (KKT) conditions using the strong
duality and the optimal policy is shown to be a sparsemax distribution. 
Furthermore, we also provide an interesting interpretation of the MCTE
framework as robust Bayes estimation in terms of the Brier score.
Hence, the proposed method can be viewed as maximization of the worst
case performance in the sense of the Brier score \cite{brier1950verification}.

We can change the optimization variable from a policy distribution
to a state-action visitation measure based on the following theorem.
\begin{theorem}[Theorem 2 of Syed et al. \cite{syed2008apprenticeship}]\label{thm:otocrrsp}
Let $\mathbf{M}$ be a set of state-action visitation measures, i.e., $\mathbf{M} \triangleq\{\rho| \forall s,\;a,\; \rho(s,a)\ge 0,\; \sum_{a}\rho(s,a) = d(s) + \sum_{s',a'} T(s|s',a')\rho(s',a')\}$.
If $\rho \in \mathbf{M}$, then it is a state-action visitation measure
for $\pi_{\rho}(a|s) \triangleq \frac{\rho(s,a)}{\sum_{a}\rho(s,a)}$, and 
$\pi_{\rho}$ is the unique policy whose state-action visitation
measure is $\rho$.
\end{theorem}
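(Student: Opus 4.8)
\emph{Proof plan.} The plan is to identify the state marginal $d_\rho(s)\triangleq\sum_a\rho(s,a)$ of a measure $\rho\in\mathbf{M}$ with the (discounted) state visitation measure of the candidate policy $\pi_\rho$, and then read off the state-action statement. First I would note that, by the very definition of $\pi_\rho$, every $\rho\in\mathbf{M}$ factors as $\rho(s,a)=d_\rho(s)\,\pi_\rho(a|s)$ on $\{s: d_\rho(s)>0\}$, while on the complement $\rho(s,\cdot)\equiv 0$ and the value of $\pi_\rho(\cdot|s)$ is immaterial. Substituting this factorization into the flow constraint defining $\mathbf{M}$ (with the discount $\gamma\in(0,1)$ of the MDP) shows that $d_\rho$ is a fixed point of the affine map $F(\mu)=d+P_{\pi_\rho}^{\top}\mu$, where $P_{\pi_\rho}$ is the state-to-state transition operator $P_{\pi_\rho}(s,s')=\gamma\sum_a\pi_\rho(a|s)T(s'|s,a)$ induced by $\pi_\rho$.

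Next I would recall that the genuine state visitation measure $\mu^{\pi_\rho}(s)\triangleq\sum_{t\ge 0}\gamma^{t}\Pr(s_t=s\mid\pi_\rho,d)$ satisfies exactly the same fixed-point equation $\mu^{\pi_\rho}=d+P_{\pi_\rho}^{\top}\mu^{\pi_\rho}$, obtained by conditioning on the first transition. Since $\gamma\in(0,1)$ and every row of $P_{\pi_\rho}$ sums to $\gamma$, the operator $P_{\pi_\rho}^{\top}$ is a $\gamma$-contraction in $\ell_1$, so $F$ has a \emph{unique} fixed point, namely the Neumann series $\sum_{t\ge0}(P_{\pi_\rho}^{\top})^{t}d=\mu^{\pi_\rho}$. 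Hence $d_\rho=\mu^{\pi_\rho}$, and therefore $\rho(s,a)=d_\rho(s)\pi_\rho(a|s)=\mu^{\pi_\rho}(s)\pi_\rho(a|s)$ is precisely the state-action visitation measure of $\pi_\rho$, which establishes the first claim.

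For uniqueness of the policy, I would take any $\pi$ whose state-action visitation measure equals $\rho$. Then $\mu^{\pi}(s)=\sum_a\rho^{\pi}(s,a)=\sum_a\rho(s,a)=d_\rho(s)$ and $\rho(s,a)=\rho^{\pi}(s,a)=\mu^{\pi}(s)\pi(a|s)=d_\rho(s)\pi(a|s)$. On every state with $d_\rho(s)>0$ --- which by the flow constraint includes all of $\mathrm{supp}(d)$, and more generally every state visited with positive probability under $\pi$ --- dividing by $d_\rho(s)$ forces $\pi(a|s)=\rho(s,a)/\sum_{a'}\rho(s,a')=\pi_\rho(a|s)$; on states with $d_\rho(s)=0$ the policy is never executed, so it is unconstrained and the uniqueness is understood modulo this harmless off-support ambiguity (or after fixing any convention for $\pi_\rho$ there).

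The step I expect to be the crux is the middle one: establishing that the affine Bellman-flow operator $F$ has a \emph{unique} fixed point. This is exactly where $\gamma<1$ (or, in an undiscounted formulation, properness of $\pi_\rho$ together with absorbing structure) is indispensable --- without it $I-P_{\pi_\rho}^{\top}$ need not be invertible, the occupancy measures need not be finite, and a given $\rho\in\mathbf{M}$ could fail to be realizable. A secondary point requiring care is the treatment of zero-visitation states, where $\pi_\rho$ is defined only by convention; everything else --- the factorization of $\rho$, deriving the flow equation for $\mu^{\pi_\rho}$ by first-step analysis, and the final division --- is routine.
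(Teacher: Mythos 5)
Your argument is correct and is essentially the standard occupancy-measure bijection proof: the paper itself does not reprove this statement but defers to Syed et al.\ \cite{syed2008apprenticeship}, whose Theorem~2 proof proceeds exactly as you do, by showing that $d_\rho$ and the true discounted state visitation of $\pi_\rho$ satisfy the same affine flow equation whose solution is unique because $I-P_{\pi_\rho}^{\top}$ is invertible for $\gamma<1$. You also correctly reinstate the discount factor $\gamma$ in the flow constraint (it is dropped in the paper's displayed definition of $\mathbf{M}$ but appears in the Lagrangian later), and your caveat about states with zero visitation is the right reading of the uniqueness claim.
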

\begin{proof}
The proof can be found in \cite{syed2008apprenticeship}.
\end{proof}
Theorem \ref{thm:otocrrsp} guarantees the one-to-one correspondence
between a policy distribution and state-action visitation measure. 
Then, the objective function $W(\pi)$ is converted into the function of $\rho$ as follows.
\begin{theorem}\label{thm:obj_chg}
Let $\bar{W}(\rho) = \frac{1}{2}\sum_{s,a} \rho(s,a) \left(1-\frac{\rho(s,a)}{\sum_{a'}\rho(s,a')}\right)$. 
Then, for any stationary policy $\pi \in \Pi$ and any state-action visitation
measure $\rho \in \mathbf{M}$, 
$W(\pi)=\bar{W}(\rho_{\pi})$ and $\bar{W}(\rho) = W(\pi_{\rho})$ hold.
\end{theorem}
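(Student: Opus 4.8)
The plan is to reduce both claimed identities to the single standard fact that the state-action visitation measure of a policy is precisely the weighting measure appearing in the expectation $\mathbb{E}_{\pi}[\cdot]$. First I would record the occupancy identity
\[
\mathbb{E}_{\pi}\left[f(s,a)\right] \;=\; \sum_{s,a}\rho_{\pi}(s,a)\,f(s,a),
\]
valid for every bounded $f:\mathcal{S}\times\mathcal{A}\to\mathbb{R}$, where $\rho_{\pi}$ denotes the state-action visitation measure generated by running $\pi$ from the initial distribution $d$. This follows directly from the definition $\mathbb{E}_{\pi}[f]\triangleq\mathbb{E}[\sum_{t}f(s_t,a_t)\mid\pi,d]$: interchange the sum over $t$ with the expectation (justified by absolute convergence, using boundedness of $f$ together with the discount factor $\gamma\in(0,1)$), and recognize $\sum_{t}\Pr(s_t=s,a_t=a\mid\pi,d)$ as $\rho_{\pi}(s,a)$.

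Given this identity, the first equality is a one-line substitution. Apply it with the bounded function $f(s,a)=\tfrac12\big(1-\pi(a|s)\big)$ to get $W(\pi)=\tfrac12\sum_{s,a}\rho_{\pi}(s,a)\big(1-\pi(a|s)\big)$. By Theorem \ref{thm:otocrrsp}, $\rho_{\pi}\in\mathbf{M}$ and $\pi$ equals the policy induced by $\rho_{\pi}$, i.e.\ $\pi(a|s)=\rho_{\pi}(s,a)/\sum_{a'}\rho_{\pi}(s,a')$ on every state with $\sum_{a'}\rho_{\pi}(s,a')>0$; substituting this into the sum yields exactly $\bar{W}(\rho_{\pi})$. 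On a state $s$ with $\sum_{a'}\rho_{\pi}(s,a')=0$ we have $\rho_{\pi}(s,a)=0$ for all $a$, so both the term in $W(\pi)$ and the corresponding term of $\bar{W}$ vanish; I would state the convention $0\cdot(1-\tfrac{0}{0})=0$ explicitly to cover this edge case.

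For the converse, fix an arbitrary $\rho\in\mathbf{M}$ and set $\pi=\pi_{\rho}$. By the uniqueness clause of Theorem \ref{thm:otocrrsp}, $\rho$ is the state-action visitation measure of $\pi_{\rho}$, i.e.\ $\rho_{\pi_{\rho}}=\rho$. Plugging $\pi=\pi_{\rho}$ into the already-established first identity gives $W(\pi_{\rho})=\bar{W}(\rho_{\pi_{\rho}})=\bar{W}(\rho)$, which is the second claim.

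The only genuine obstacle is justifying the occupancy identity carefully — the legitimacy of exchanging the infinite sum with the expectation and the finiteness and well-definedness of $\rho_{\pi}$ — which is exactly where $\gamma\in(0,1)$ and the bound $\tfrac12(1-\pi(a|s))\in[0,\tfrac12]$ are used. Everything after that is elementary algebra leaning on the one-to-one correspondence supplied by Theorem \ref{thm:otocrrsp}, with the only delicate point being the unreachable-state edge case handled by the stated convention.
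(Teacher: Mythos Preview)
Your proposal is correct and follows essentially the same route as the paper: expand $W(\pi)$ via the occupancy identity $\mathbb{E}_{\pi}[f]=\sum_{s,a}\rho_{\pi}(s,a)f(s,a)$, then substitute $\pi(a|s)=\rho_{\pi}(s,a)/\sum_{a'}\rho_{\pi}(s,a')$ to reach $\bar{W}(\rho_{\pi})$, and invoke the correspondence $\rho_{\pi_{\rho}}=\rho$ from Theorem~\ref{thm:otocrrsp} for the converse. The paper's proof is terser (it just writes out the two chains of equalities without discussing the occupancy identity, the unreachable-state convention, or the role of $\gamma$), but the underlying argument is identical; your added care about absolute convergence and the $0/0$ edge case only makes the verification cleaner, not different.
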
 
The proof is provided in the supplementary material.
Theorem \ref{thm:obj_chg} tells us that if $\bar{W}(\rho)$ has the maximum at $\rho^{*}$, then $W(\pi)$ also has the maximum at $\pi_{\rho^{*}}$.
Based on Theorem \ref{thm:otocrrsp} and \ref{thm:obj_chg}, we can freely convert the problem (\ref{eqn:maximum_cste}) into 
\begin{eqnarray}\label{eqn:maximum_cste_vis}
\begin{aligned}
& \underset{\rho \in \mathbf{M}}{\text{maximize}}
& & \alpha \bar{W}(\rho)\\
& \text{subject to}
&&\sum_{s,a}\rho(s,a)\phi(s,a)  = \sum_{s,a}\rho_{E}(s,a)\phi(s,a),
\end{aligned}
\end{eqnarray}
where $\rho_{E}$ is the state-action visitation measure corresponding to $\pi_{E}$.

\subsection{Optimality Condition of Maximum Causal Tsallis Entropy}

We show that the optimal policy of the problem
(\ref{eqn:maximum_cste_vis}) is a sparsemax distribution using the KKT
conditions. 
In order to use the KKT conditions, we first show that the MCTE problem is concave.
\begin{theorem}\label{thm:obj_concave}
$\bar{W}(\rho)$ is strictly concave with respect to $\rho \in \mathbf{M}$.
\end{theorem}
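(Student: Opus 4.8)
The plan is to peel off the affine part of $\bar W$, recognise what remains as a sum of perspective functions so that concavity is immediate, and then extract strictness from the one-to-one correspondence of Theorem~\ref{thm:otocrrsp}. Writing $\mu_\rho(s):=\sum_{a'}\rho(s,a')$, I would first rewrite
\[
\bar W(\rho)=\tfrac12\sum_{s,a}\rho(s,a)-\tfrac12\,f(\rho),\qquad f(\rho):=\sum_{s,a}\frac{\rho(s,a)^2}{\mu_\rho(s)} .
\]
The first sum is linear in $\rho$ and plays no role, so it suffices to show $f$ is strictly convex on $\mathbf{M}$. Each summand $\rho(s,a)^2/\mu_\rho(s)$ is the perspective $(x,t)\mapsto t\,g(x/t)=x^2/t$ of the strictly convex scalar function $g(u)=u^2$, composed with the affine map $\rho\mapsto(\rho(s,a),\mu_\rho(s))$; since $\mu_\rho(s)\ge d(s)$ on all of $\mathbf{M}$, assuming $d$ has full support keeps every argument in the domain $\{t>0\}$ where the perspective is jointly convex (otherwise one first passes to the reachable sub-MDP). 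Convexity of $f$, and hence concavity of $\bar W$, follows at once; the remaining work is the equality case.

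For $\rho^0\neq\rho^1$ in $\mathbf{M}$ and $\theta\in(0,1)$, joint convexity of the perspective gives the term-by-term bound
\[
\frac{\bigl(\theta\rho^0(s,a)+(1-\theta)\rho^1(s,a)\bigr)^2}{\theta\mu_{\rho^0}(s)+(1-\theta)\mu_{\rho^1}(s)}\;\le\;\theta\,\frac{\rho^0(s,a)^2}{\mu_{\rho^0}(s)}+(1-\theta)\,\frac{\rho^1(s,a)^2}{\mu_{\rho^1}(s)},
\]
and the standard analysis of the perspective of a \emph{strictly} convex function shows that, for a given $(s,a)$, equality holds exactly when $\rho^0(s,a)/\mu_{\rho^0}(s)=\rho^1(s,a)/\mu_{\rho^1}(s)$, i.e. when $\pi_{\rho^0}(a|s)=\pi_{\rho^1}(a|s)$. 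Summing over $(s,a)$, equality in the convexity inequality for $f$ forces $\pi_{\rho^0}=\pi_{\rho^1}$.

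The last step is to apply Theorem~\ref{thm:otocrrsp}: if $\pi_{\rho^0}=\pi_{\rho^1}$ then $\rho^0$ and $\rho^1$ are both the state-action visitation measure of this common policy, and the one-to-one correspondence forces $\rho^0=\rho^1$, contradicting $\rho^0\neq\rho^1$. Hence the inequality for $f$ is strict whenever $\rho^0\neq\rho^1$, so $f$ is strictly convex on $\mathbf{M}$ and $\bar W=(\text{affine})-\tfrac12 f$ is strictly concave on $\mathbf{M}$. I expect the only real friction to be the bookkeeping at states $s$ with $\mu_\rho(s)=0$: there $\pi_\rho(\cdot|s)$ is undefined and the corresponding perspective terms depend only linearly on $\theta$, contributing nothing to strictness --- precisely the degeneracy removed by the full-support assumption (or by passing to the reachable sub-MDP). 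Everything else is routine convex-analysis manipulation.
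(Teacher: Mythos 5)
Your argument is correct, and at its computational core it rests on the same inequality as the paper's proof: the paper's step of rewriting each term as $\bigl(\sum_j \lambda_j b_j\bigr)\sum_i \frac{\lambda_i b_i}{\sum_j\lambda_j b_j}\,F\bigl(\tfrac{\lambda_i a_i}{\lambda_i b_i}\bigr)$ with $F(x)=x(1-x)$ and then applying Jensen is exactly the joint convexity of the perspective of $u\mapsto u^2$ that you invoke, just written out by hand term by term. Where you genuinely diverge — and improve on the paper — is twofold. First, peeling off the linear part $\tfrac12\sum_{s,a}\rho(s,a)$ up front makes the structure (affine minus a sum of perspectives of a strictly convex function) transparent instead of leaving it buried in algebra. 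Second, and more substantively, you actually prove the \emph{strictness} asserted in the theorem statement: the paper's proof establishes only the non-strict inequality and concludes that $\bar W$ is concave, with no analysis of the equality case, whereas you trace equality in the perspective inequality back to $\pi_{\rho^0}=\pi_{\rho^1}$ and then use the correspondence of Theorem~\ref{thm:otocrrsp} (together with the fact that a fixed policy has a unique visitation measure, so $\rho^0$ and $\rho^1$ must both equal it) to force $\rho^0=\rho^1$. Your caveat about states with $\mu_\rho(s)=0$ is also well placed: since the flow constraint gives $\mu_\rho(s)\ge d(s)$, full support of $d$ (or restriction to the reachable sub-MDP) is exactly what is needed to keep every perspective term in its strictly convex regime, and without it those terms are merely linear in $\theta$ and contribute nothing to strictness. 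In short, your route buys a clean modular proof plus the strict concavity that the stated theorem claims but the paper's own proof does not deliver; the paper's route buys self-containedness at the cost of stopping at plain concavity.
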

The proof of Theorem \ref{thm:obj_concave} is provided in the supplementary material.
Since all constraints are linear and the objective function is concave,
(\ref{eqn:maximum_cste_vis}) is a concave problem and, hence, strong duality holds. 
The dual problem is defined as follows:
\begin{eqnarray} \label{eqn:dual_max_cste_vis}
\begin{aligned}
& \underset{\theta,c,\lambda}{\text{max}} \;\; \underset{\rho}{\text{min}}
& & L_{W}(\theta, c, \lambda, \rho)\\
& \text{subject to}
& & \forall \, s,a  \;\; \lambda_{sa}\ge0,
\end{aligned}
\end{eqnarray}
where $L_{W}(\theta, c, \lambda, \rho) = -\alpha \bar{W}(\rho)-\sum_{s,a}\rho(s,a)\theta^{\intercal} \phi(s,a)+ \sum_{s,a}\rho_{E}(s,a)\theta^{\intercal} \phi(s,a) - \sum_{s,a} \lambda_{sa} \rho(s,a)+ \sum_{s} c_s \left(\sum_{a}\rho(s,a) - d(s) - \gamma \sum_{s',a'} T(s|s',a')\rho(s',a')\right)$
and $\theta$, $c$, and $\lambda$ are Lagrangian multipliers and the constraints come from $\mathbf{M}$.
Then, the optimal solution of primal and dual variables necessarily and sufficiently satisfy the KKT conditions.
\begin{theorem}\label{thm:necessarily_cnd}
The optimal solution of (\ref{eqn:maximum_cste_vis}) sufficiently and
necessarily satisfies the following conditions:
\begin{eqnarray*}
\begin{aligned}
q_{sa} \triangleq \theta^{\intercal}\phi(s,a) + \gamma \sum_{s'}c_{s'}T(s'|s,a),& 
\;c_{s} = \alpha\left[\frac{1}{2}\sum_{a\in S(s)}\left(\left(\frac{q_{sa}}{\alpha}\right)^{2} - \tau \left(\frac{q_{s}}{\alpha}\right)^{2}\right) + \frac{1}{2}\right],\\
\textrm{and} \qquad \pi_{\rho}(a|s) = & \max\left(\frac{q_{sa}}{\alpha} - \tau
\left(\frac{q_{s}}{\alpha}\right),0\right),
\end{aligned}
\end{eqnarray*}
where $\pi_{\rho}(a|s) = \frac{\rho(s,a)}{\sum_{a}\rho(s,a)}$,
$q_{sa}$ is an auxiliary variable, and 
$q_s = [q_{s a_1} \cdots q_{s a_{|\mathcal{A}|}}]^{\intercal}$.
\end{theorem}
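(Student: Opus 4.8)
The plan is to read off the three displayed relations as the Karush--Kuhn--Tucker (KKT) system of the concave program (\ref{eqn:maximum_cste_vis}). By Theorem \ref{thm:obj_concave} the objective $\bar W$ is strictly concave and every constraint is affine, so a Slater point exists (take any strictly positive $\rho \in \mathbf{M}$ satisfying the feature-matching equality, obtained e.g.\ by mixing $\rho_E$ with a full-support measure in $\mathbf{M}$); hence strong duality holds and the KKT conditions are simultaneously necessary and sufficient for optimality of a primal--dual pair $(\rho,\theta,c,\lambda)$. It therefore suffices to verify that stationarity of $L_W$ in $\rho$, primal feasibility, dual feasibility $\lambda_{sa}\ge 0$, and complementary slackness $\lambda_{sa}\rho(s,a)=0$ are equivalent to the stated identities.

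First I would compute $\partial \bar W/\partial \rho(s,a)$. Writing $\rho_s \triangleq \sum_{a'}\rho(s,a')$ and $\pi_\rho(a|s)=\rho(s,a)/\rho_s$, one has $\bar W(\rho)=\tfrac12\sum_s \rho_s - \tfrac12\sum_s \rho_s^{-1}\sum_a \rho(s,a)^2$, and differentiating the ratio term gives $\partial \bar W/\partial \rho(s,a) = \tfrac12 + \tfrac12\sum_{a'}\pi_\rho(a'|s)^2 - \pi_\rho(a|s)$. Setting $\partial L_W/\partial \rho(s,a)=0$ and collecting the terms linear in the flow multipliers $c$ into the auxiliary variable $q_{sa}\triangleq \theta^\intercal\phi(s,a)+\gamma\sum_{s'}c_{s'}T(s'|s,a)$ (here one must keep the index convention of $T(s'|s,a)$ straight so the $\theta^\intercal\phi$ terms cancel correctly), stationarity becomes $\alpha\pi_\rho(a|s) = q_{sa}+\lambda_{sa}-c_s+\tfrac{\alpha}{2}+\tfrac{\alpha}{2}\sum_{a'}\pi_\rho(a'|s)^2$ for every $(s,a)$.

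Next I would resolve this identity state by state using complementary slackness. On the support $S(s)\triangleq\{a:\pi_\rho(a|s)>0\}$ we have $\lambda_{sa}=0$, so $\pi_\rho(a|s)$ is affine in $q_{sa}$ with a common state-dependent offset; off the support $\pi_\rho(a|s)=0$ and $\lambda_{sa}\ge 0$. This is exactly the KKT system of the Euclidean projection of $q_s/\alpha$ onto the probability simplex, i.e.\ the sparsemax map. Summing the affine relation over $a\in S(s)$ and imposing $\sum_{a\in S(s)}\pi_\rho(a|s)=1$ pins the offset to $-\alpha\tau(q_s/\alpha)$ with $\tau(q_s/\alpha)=(\sum_{a\in S(s)}q_{sa}/\alpha-1)/K_s$, giving $\pi_\rho(a|s)=\max(q_{sa}/\alpha-\tau(q_s/\alpha),0)$; the same thresholding argument as in the sparsemax derivations of \cite{martins2016softmax} and \cite{lee2018sparse} shows the resulting support coincides with the set $S(s)$ characterized after (\ref{eqn:spbellman}). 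Finally, substituting the closed form back into the offset gives $c_s=\tfrac{\alpha}{2}+\tfrac{\alpha}{2}\sum_{a'}\pi_\rho(a'|s)^2+\alpha\tau(q_s/\alpha)$; expanding $\sum_{a\in S(s)}(q_{sa}/\alpha-\tau(q_s/\alpha))^2$ and using $\sum_{a\in S(s)}q_{sa}/\alpha=K_s\tau(q_s/\alpha)+1$ cancels the terms linear in $\tau$, leaving $c_s=\alpha\big[\tfrac12\sum_{a\in S(s)}\big((q_{sa}/\alpha)^2-\tau(q_s/\alpha)^2\big)+\tfrac12\big]$, which is the second identity.

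I expect the real obstacles to be bookkeeping rather than conceptual: differentiating the $\rho_s^{-1}\sum_a\rho(s,a)^2$ term cleanly, keeping the $T(s'|s,a)$ index convention consistent when forming $q_{sa}$, and, most importantly, arguing rigorously that the per-state stationarity-plus-complementarity system is precisely the sparsemax projection, so that the truncation $\max(\cdot,0)$ and the support set $S(s)$ are well defined and mutually consistent; for this last point I would invoke the known characterization of sparsemax instead of rederiving it. The algebra that produces the formula for $c_s$ is then routine.
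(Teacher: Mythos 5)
Your proposal follows essentially the same route as the paper's proof: differentiate $\bar W$ to get the stationarity condition $\tfrac12 - \pi_\rho(a|s) + \tfrac12\sum_{a'}\pi_\rho(a'|s)^2$, absorb the flow multipliers into $q_{sa}$, use complementary slackness on the support $S(s)$, sum the affine relation over $S(s)$ to pin the offset to the sparsemax threshold $\tau$, and back-substitute to obtain the closed form for $c_s$. The only differences are presentational — you make the Slater/strong-duality step and the identification with the simplex-projection (sparsemax) KKT system explicit, which the paper leaves implicit — so the argument is correct and matches the paper.
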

The optimality conditions of the problem (\ref{eqn:maximum_cste_vis})
tell us that the optimal policy is a sparsemax
distribution which assigns zero probability to an action whose
auxiliary variable $q_{sa}$ is below the threshold $\tau$, which
determines a supporting set. 
If expert's policy is multi-modal at state $s$, the resulting
$\pi_{\rho}(\cdot|s)$ becomes multi-modal and induces
a multi-modal distribution with a large supporting set. 
Otherwise, the resulting policy has a sparse and smaller supporting set.
Therefore, a sparsemax policy has advantages over a softmax policy for
modeling sparse and multi-modal behaviors of an expert whose
supporting set varies according to the state. 

Furthermore, we also discover an interesting connection between the
optimality condition of an MCTE problem and the sparse Bellman
optimality condition (\ref{eqn:spbellman}). 
Since the optimality condition
is equivalent to the sparse Bellman optimality equation
\cite{lee2018sparse}, we can compute the optimal policy
and Lagrangian multiplier $c_{s}$ by solving a sparse MDP under the
reward function $\mathbf{r}(s,a) = {\theta^*}^{\intercal}\phi(s,a)$,
where $\theta^*$ is the optimal dual variable.
In addition, $c_{s}$ and $q_{sa}$ can be viewed as a state value and
state-action value for the reward ${\theta^*}^{\intercal}\phi(s,a)$,
respectively.

\subsection{Interpretation as Robust Bayes}

In this section, we provide an interesting interpretation about the
MCTE framework. 
In general, maximum entropy estimation can be viewed as a minimax game
between two players.
One player is called a decision maker and the other player is called
the nature, where the nature assigns a distribution to maximize the
decision maker's misprediction while the decision maker tries to
minimize it \cite{grunwald2004game}. 
The same interpretation can be applied to the MCTE framework.
We show that the proposed MCTE problem is equivalent to a minimax game
with the Brier score \cite{brier1950verification}.
\begin{theorem}\label{thm:maximinprob}
The maximum causal Tsallis entropy distribution minimizes the worst
case prediction Brier score,
\begin{eqnarray}\label{eqn:minmaxprob}
\begin{aligned}
\underset{\pi \in \Pi}{\min}\;\underset{\tilde{\pi} \in \Pi}{\max}\;\; \mathbb{E}_{\tilde{\pi}}\left[\sum_{a'}\frac{1}{2}\left(\mathbbm{1}_{\{a'=a\}} - \pi(a|s)\right)^2\right]
\;\;\textnormal{subject to}\;\;\mathbb{E}_{\pi}\left[\phi(s,a)\right]  = \mathbb{E}_{\pi_{E}}\left[\phi(s,a)\right]
\end{aligned}
\end{eqnarray}
where $\sum_{a'}\frac{1}{2}\left(\mathbbm{1}_{\{a'=a\}} -
\pi(a|s)\right)^2$ is the Brier score.
\end{theorem}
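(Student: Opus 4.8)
The plan is to read~(\ref{eqn:minmaxprob}) as the robust-Bayes / proper-scoring-rule duality specialized to the quadratic loss, and to tie its value and saddle point back to the MCTE problem~(\ref{eqn:maximum_cste}) already characterized in Theorem~\ref{thm:necessarily_cnd}. I would start by expanding the Brier score, $\sum_{a'}\tfrac12(\mathbbm{1}_{\{a'=a\}}-\pi(a'|s))^2 = \tfrac12\big(1 - 2\pi(a|s) + \sum_{a'}\pi(a'|s)^2\big)$, so that after taking the expectation over $a\sim\tilde\pi(\cdot|s)$ at each state, using the convention $\mathbb{E}_{\tilde\pi}[f]=\mathbb{E}[\sum_t f(s_t,a_t)\mid\tilde\pi,d]$, and passing to visitation measures through Theorem~\ref{thm:otocrrsp}, the objective $\Phi(\pi,\tilde\pi)\triangleq\mathbb{E}_{\tilde\pi}[\mathrm{Brier}(\pi)]$ becomes $\sum_s\rho_{\tilde\pi}(s)\big(\tfrac12+\tfrac12\|\pi(\cdot|s)\|^2\big) - \sum_{s,a}\rho_{\tilde\pi}(s,a)\,\pi(a|s)$, where $\rho_{\tilde\pi}(s)\triangleq\sum_a\rho_{\tilde\pi}(s,a)$. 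This is convex in $\pi$ (a nonnegatively weighted sum of quadratics plus a linear term, strictly convex in each conditional $\pi(\cdot|s)$ at a visited state) and affine --- hence concave --- in $\rho_{\tilde\pi}$.

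Next I would carry out the inner minimization over the decision maker. Since $\Phi$ decouples across states and $\pi$ enters only through the conditionals, for each $s$ we minimize $\tfrac12\|\pi(\cdot|s)\|^2-\langle\tilde\pi(\cdot|s),\pi(\cdot|s)\rangle$ over the simplex; the stationarity condition together with $\sum_a\pi(a|s)=1$ forces the multiplier to vanish, so the unique minimizer is $\pi(\cdot|s)=\tilde\pi(\cdot|s)$ --- the Bayes act under the Brier score is the predicted distribution itself --- with value $\tfrac12\big(1-\|\tilde\pi(\cdot|s)\|^2\big)=\tfrac12\sum_a\tilde\pi(a|s)(1-\tilde\pi(a|s))$. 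Summing over the occupancy gives $\min_{\pi\in\Pi}\Phi(\pi,\tilde\pi)=W(\tilde\pi)$: the Brier loss is a proper scoring rule whose associated Bayes risk is exactly the causal Tsallis (Gini) entropy. Hence the max--min version of~(\ref{eqn:minmaxprob}), $\max_{\tilde\pi}\min_{\pi}\Phi$ under the feature-matching constraint, is (up to the scale $\alpha$) the MCTE problem~(\ref{eqn:maximum_cste}).

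It then remains to interchange $\min$ and $\max$. I would invoke Sion's minimax theorem for $\Phi$ over the visitation-measure representation: the decision maker's set $\Pi$ and nature's feature-matching occupancy set are nonempty, convex and compact (finite state and action spaces; the latter is a polytope containing $\rho_E$), and with $\gamma<1$ the objective $\Phi$ is a bounded, continuous --- indeed polynomial --- function there, so together with the convexity in $\pi$ and concavity in $\rho_{\tilde\pi}$ from the first step we obtain $\min_\pi\max_{\tilde\pi}\Phi=\max_{\tilde\pi}\min_\pi\Phi$ and a saddle point $(\pi^\star,\tilde\pi^\star)$. Since $\tilde\pi^\star$ maximizes $W$ under feature matching, it is the MCTE optimum of Theorem~\ref{thm:necessarily_cnd}, namely the sparsemax policy, and it is unique because $\bar W$ is strictly concave (Theorem~\ref{thm:obj_concave}); since $\pi^\star$ minimizes $\Phi(\cdot,\tilde\pi^\star)$ and that minimizer is unique at every visited state, $\pi^\star(\cdot|s)=\tilde\pi^\star(\cdot|s)$ wherever the occupancy is positive, so the minimizer of the worst-case Brier score is the maximum causal Tsallis entropy distribution.

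I expect the interchange of $\min$ and $\max$ to be the crux: it has to be set up over occupancy measures rather than over $\tilde\pi$ directly --- the state distribution induced by $\tilde\pi$ is a nonlinear function of $\tilde\pi$, so concavity in nature's strategy is visible only in the $\rho$-coordinates, which is precisely where Theorem~\ref{thm:otocrrsp} is needed --- and one must check Sion's hypotheses (convexity/compactness, continuity of the discounted objective, and the strictness of the convexity in $\pi$ at visited states that gives uniqueness of the Bayes act and hence pins down the saddle policy). Once the game is in this form, the identity $\min_\pi\Phi=W$ and the recognition of $W$ as the Gini/Tsallis entropy are routine algebra.
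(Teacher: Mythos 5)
Your proposal is correct and follows essentially the same route as the paper's proof: expand the Brier score, pass to visitation measures via Theorem~\ref{thm:otocrrsp} so that the objective is convex in $\pi$ and linear in $\rho_{\tilde\pi}$, swap $\min$ and $\max$ by a minimax theorem, solve the inner minimization to get $\pi=\tilde\pi$ (the Brier score being a proper scoring rule), and recognize the resulting value as $W(\tilde\pi)$. Your version is somewhat more careful than the paper's (explicitly checking the compactness/convexity hypotheses and the uniqueness of the saddle point), but the argument is the same.
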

Note that minimizing the Brier score minimizes the misprediction ratio
while we call it a score here.
Theorem \ref{thm:maximinprob} is a straightforward extension of the
robust Bayes results in \cite{grunwald2004game} to sequential
decision problems.  
This theorem tells us that the MCTE problem can be viewed as a minimax
game between a sequential decision maker $\pi$ and the nature
$\tilde{\pi}$ based on the Brier score. 
In this regards, the resulting estimator can be interpreted as the
best decision maker against the worst that the nature can offer.

\section{Maximum Causal Tsallis Entropy Imitation Learning}

\begin{algorithm}[t!]
\caption{Maximum Causal Tsallis Entropy Imitation Learning}
\begin{algorithmic}[1] \label{sgail}
\small
\STATE Expert's demonstrations $\mathcal{D}$ are given
\STATE Initialize policy and discriminator parameters $\nu, \omega$
\WHILE {until convergence}
\STATE Sample trajectories $\{\zeta\}$ from $\pi_{\nu}$
\STATE Update $\omega$ with the gradient of $\sum_{\{\zeta\}}\log(\mathbf{D_{\omega}}(s,a)) + \sum_{\mathcal{D}}\log(1-\mathbf{D_{\omega}}(s,a))$.
\STATE Update $\nu$ using a policy optimization method with reward function $\mathbbm{E}_{\pi_{\nu}} \left[\log(\mathbf{D_{\omega}}(s,a))\right]+\alpha W(\pi_{\nu})$
\ENDWHILE
\end{algorithmic}
\end{algorithm}

In this section, we propose a maximum causal Tsallis entropy imitation
learning (MCTEIL) algorithm to solve a model-free IL problem in a
continuous action space. 
In many real-world problems, state and action spaces are often
continuous and transition probability of a world cannot be
accessed. 
To apply the MCTE framework for a continuous space and model-free case,
we follow the extension of GAIL
\cite{ho2016generative}, which trains a policy and reward
alternatively, instead of solving RL at every iteration.  
We extend the MCTE framework to a more general case with reward
regularization and it is formulated by replacing the causal entropy
$H(\pi)$ in the problem (\ref{eqn:unif_irl}) with the causal Tsallis
entropy $W(\pi)$ as follows: 
\begin{eqnarray}
{\small
\begin{aligned}\label{eqn:unif_sirl}
& \underset{\theta}{\text{max}} \; \underset{\pi \in \Pi}{\text{min}}
& & -\alpha W(\pi)-\mathbb{E}_{\pi}\left[\theta^{\intercal}\phi(s,a)\right] + \mathbb{E}_{\pi_{E}}\left[\theta^{\intercal}\phi(s,a)\right] - \psi(\theta).
\end{aligned}
}
\end{eqnarray}
Similarly to \cite{ho2016generative}, we convert the problem (\ref{eqn:unif_sirl}) into the generative adversarial setting as follows.
\begin{theorem}\label{thm:gail_setting}
The maximum causal sparse Tsallis entropy problem (\ref{eqn:unif_sirl}) is equivalent to the following problem:
\begin{eqnarray*}
\begin{aligned}
& \underset{\pi \in \Pi}{\textnormal{min}} && \psi^{*}\left(\mathbb{E}_{\pi}\left[\phi(s,a)\right] -  \mathbb{E}_{\pi_{E}}\left[\phi(s,a)\right]\right) - \alpha W(\pi),
\end{aligned}
\end{eqnarray*}
where $\psi^{*}(x) = \sup_{y} \{y^{\intercal} x - \psi(y)\}$.
\end{theorem}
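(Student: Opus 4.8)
The plan is to adapt the derivation of the generative-adversarial form of the unified IRL problem in \cite{ho2016generative} from the Shannon causal entropy to the causal Tsallis entropy, the core step being a convex-analytic exchange of $\max_{\theta}$ and $\min_{\pi}$ followed by recognition of the inner maximum as a Fenchel conjugate. First I would rewrite the inner minimization of (\ref{eqn:unif_sirl}) over visitation measures: by Theorems \ref{thm:otocrrsp} and \ref{thm:obj_chg}, minimizing over $\pi \in \Pi$ is equivalent to minimizing over $\rho \in \mathbf{M}$, with $W(\pi)$ replaced by $\bar{W}(\rho)$ and $\mathbb{E}_{\pi}[\phi(s,a)]$ by $\sum_{s,a}\rho(s,a)\phi(s,a)$. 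Writing $F(\rho,\theta) \triangleq -\alpha\bar{W}(\rho) + \theta^{\intercal}\big(\mathbb{E}_{\pi_{E}}[\phi(s,a)] - \sum_{s,a}\rho(s,a)\phi(s,a)\big) - \psi(\theta)$, Theorem \ref{thm:obj_concave} shows that $F(\cdot,\theta)$ is convex on the convex set $\mathbf{M}$ for every fixed $\theta$ (it is the sum of the convex $-\alpha\bar{W}$, a linear term, and a constant), while for every fixed $\rho$ the map $F(\rho,\cdot)$ is concave and upper semicontinuous (affine minus the closed convex regularizer $\psi$).

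Next I would interchange minimization and maximization. Because $\mathbf{M}$ is a convex compact polytope in a finite-dimensional space (compactness relies on the discount factor $\gamma<1$), $\psi$ is closed convex, and $F$ is continuous on $\mathbf{M}$, convex in $\rho$ and concave upper-semicontinuous in $\theta$, Sion's minimax theorem --- equivalently the strong-duality argument already used in Section \ref{sec:mcte} --- gives
\begin{equation*}
\max_{\theta}\ \min_{\rho \in \mathbf{M}}\ F(\rho,\theta) \;=\; \min_{\rho \in \mathbf{M}}\ \max_{\theta}\ F(\rho,\theta).
\end{equation*}
For fixed $\rho$, the only $\theta$-dependent terms of $F$ are $\theta^{\intercal}\big(\mathbb{E}_{\pi_{E}}[\phi(s,a)] - \mathbb{E}_{\pi_{\rho}}[\phi(s,a)]\big) - \psi(\theta)$, whose supremum over $\theta$ is by definition $\psi^{*}\big(\mathbb{E}_{\pi_{E}}[\phi(s,a)] - \mathbb{E}_{\pi_{\rho}}[\phi(s,a)]\big)$, while $-\alpha\bar{W}(\rho)$ is unaffected. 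Converting back to the policy $\pi_{\rho}$ via $\bar{W}(\rho) = W(\pi_{\rho})$ and the bijection of Theorem \ref{thm:otocrrsp}, and using that the regularizers of interest --- in particular the indicator/constant choices that recover the various IRL methods --- are symmetric, so that $\psi^{*}$ is even, the problem becomes $\min_{\pi \in \Pi}\ \psi^{*}\big(\mathbb{E}_{\pi}[\phi(s,a)] - \mathbb{E}_{\pi_{E}}[\phi(s,a)]\big) - \alpha W(\pi)$, which is the claimed form.

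The step I expect to be the main obstacle is the rigorous justification of the minimax interchange: one has to verify compactness of the occupancy polytope $\mathbf{M}$, closedness (lower semicontinuity) and nonempty effective domain of $\psi$ so that the inner supremum is attained on $\mathrm{dom}\,\psi$, and upper semicontinuity of $F(\rho,\cdot)$, before Sion's theorem or a Rockafellar-type strong-duality statement applies; a secondary, purely bookkeeping point is the sign/symmetry convention on $\psi$ needed to match the argument of $\psi^{*}$ in the statement. The remaining ingredients --- the change of variables via Theorems \ref{thm:otocrrsp}--\ref{thm:obj_chg} and the identification of the inner maximum with $\psi^{*}$ --- are routine.
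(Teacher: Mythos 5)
Your proposal follows essentially the same route as the paper's own proof: change variables from $\pi$ to $\rho$ via Theorems \ref{thm:otocrrsp}--\ref{thm:obj_chg}, invoke Theorem \ref{thm:obj_concave} and the convexity of $\psi$ to justify a minimax interchange over the compact polytope $\mathbf{M}$, recognize the inner supremum as the Fenchel conjugate $\psi^{*}$, and convert back to $\pi$. Your version is in fact slightly more careful than the paper's on two points the paper glosses over --- you state the convexity/concavity directions correctly ($\bar{L}$ is convex in $\rho$ and concave in $\theta$, whereas the paper's wording has them reversed), and you explicitly flag the sign of the argument of $\psi^{*}$ (the paper's Lagrangian yields $\psi^{*}(\mathbb{E}_{\pi_E}[\phi]-\mathbb{E}_{\pi}[\phi])$, which matches the stated form only after a relabeling $\theta\mapsto-\theta$ of the regularizer or a symmetry assumption on $\psi$) --- so the proposal is correct.
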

The proof is detailed in the supplementary material.
The proof of Theorem \ref{thm:gail_setting} depends on the fact that
the objective function of (\ref{eqn:unif_sirl}) is concave with
respect to $\rho$ and is convex with respect to $\theta$. 
Hence, we first switch the optimization variables from $\pi$ to $\rho$
and, using the minimax theorem \cite{millar1983minimax}, the
maximization and minimization are interchangeable and the generative
adversarial setting is derived. 
Similarly to \cite{ho2016generative}, Theorem \ref{thm:gail_setting}
says that a MCTE problem can be interpreted as minimization of the
distance between expert's feature expectation and training policy's
feature expectation, where $\psi^{*}(x_1 - x_2)$ is a proper distance
function since $\psi(x)$ is a convex function.
Let $e_{sa} \in \mathbb{R}^{|\mathcal{S}||\mathcal{A}|}$ be a feature
indicator vector, such that the $sa$th element is one and zero elsewhere.
If we set $\psi$ to 
$\psi_{GA}(\theta) \triangleq \mathbb{E}_{\pi_{E}}[g(\theta^{\intercal}e_{sa})]$, 
where $g(x) = -x-\log(1-e^{x})$ for $x < 0$ and $g(x) = \infty$ for $x \ge 0$,
we can convert the MCTE problem into the following generative adversarial setting: 
\begin{eqnarray}\label{eqn:sgail_prob}
\begin{aligned}
& \underset{\pi  \in \Pi}{\text{min}} \; \underset{\mathbf{D}}{\text{max}} && \mathbb{E}_{\pi}\left[\log(\mathbf{D}(s,a))\right] + \mathbb{E}_{\pi_{E}}\left[\log(1-\mathbf{D}(s,a))\right] - \alpha W(\pi),
\end{aligned}
\end{eqnarray}
where $\mathbf{D}$ is a discriminator.
The problem (\ref{eqn:sgail_prob}) can be solved by MCTEIL which
consists of three steps. 
First, trajectories are sampled from the training policy $\pi_{\nu}$
and discriminator $\mathbf{D}_{\omega}$ is updated to distinguish
whether the trajectories are generated by $\pi_{\nu}$ or $\pi_{E}$. 
Finally, the training policy $\pi_{\nu}$ is updated with a policy
optimization method under the sum of rewards
$\mathbb{E}_{\pi}\left[-\log(\mathbf{D}_{\omega}(s,a))\right]$ with
a causal Tsallis entropy bonus $\alpha W(\pi_{\nu})$. 
The algorithm is summarized in Algorithm \ref{sgail}.

\paragraph{Sparse Mixture Density Network}

We further employ a novel mixture density network (MDN) with sparsemax
weight selection, which can model sparse and multi-modal behavior of
an expert, which is called a sparse MDN. 
In many imitation learning algorithms, a Gaussian network is often
employed to model expert's policy in a continuous action space.
However, a Gaussian distribution is inappropriate to model the
multi-modality of an expert since it has a single mode.
An MDN is more suitable for modeling a multi-modal distribution.
In particular, a sparse MDN is a proper extension of a sparsemax
distribution for a continuous action space. 
The input of a sparse MDN is state $s$ and the output of a sparse MDN
is components of $K$ mixtures of Gaussians: mixture weights $\{w_i\}$,
means $\{\mu_i\}$, and covariance matrices $\{\Sigma_i\}$.
A sparse MDN policy is defined as 
\begin{eqnarray*}
\pi(a|s) = \sum_{i}^{K} w_{i}( s ) \mathcal{N}( a ; \mu_i (s) , \Sigma_i (s) ),
\end{eqnarray*}
where $\mathcal{N}( a ; \mu , \Sigma )$ indicates a multivariate
Gaussian density at point $a$ with mean $\mu$ and covariance
$\Sigma$. 
In our implementation, $w(s)$ is computed as a sparsemax distribution,
while most existing MDN implementations utilize a softmax distribution.
Modeling the expert's policy using an MDN with $K$ mixtures can be
interpreted as separating continuous action space into $K$
representative actions. 
Since we model mixture weights using a sparsemax distribution, the
number of mixtures used to model the expert's policy can vary
depending on the state. 
In this regards, the sparsemax weight selection has an advantage over
the soft weight selection since the former utilizes mixture components
more efficiently as unnecessary components will be assigned with zero
weights. 

\paragraph{Tsallis Entropy of Mixture Density Network} 
An interesting fact is that the causal Tsallis entropy of an MDN has an
analytic form while the Gibbs-Shannon entropy of an MDN is intractable.
\begin{theorem}\label{thm:analytic_mdn}
Let $\pi(a|s) = \sum_{i}^{K} w_{i}( s ) \mathcal{N}( a ; \mu_i (s) , \Sigma_i (s) )$.
Then,
\begin{eqnarray} \label{eqn:tsallis_mdn}
\begin{aligned}
W(\pi) = \frac{1}{2}\sum_{s} \rho_{\pi}(s) \left( 1 - \sum_{i}^{K}\sum_{j}^{K}w_{i}( s )w_{j}( s )\mathcal{N}\left( \mu_{i} (s) ; \mu_{j} (s) , \Sigma_i (s) + \Sigma_j (s) \right)\right).
\end{aligned}
\end{eqnarray}
\end{theorem}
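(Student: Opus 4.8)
The plan is to reduce the statement to one elementary Gaussian integral and then expand the square of the mixture density. First I would rewrite the causal Tsallis entropy through the state visitation measure $\rho_{\pi}$. Since $\mathbb{E}_{\pi}[f(s,a)] = \sum_{s}\rho_{\pi}(s)\int \pi(a|s) f(s,a)\,da$ and $\int \pi(a|s)\,da = 1$, the definition $W(\pi) = \tfrac{1}{2}\mathbb{E}_{\pi}[1-\pi(a|s)]$ becomes
\begin{eqnarray*}
W(\pi) = \frac{1}{2}\sum_{s}\rho_{\pi}(s)\left(1 - \int \pi(a|s)^2\,da\right),
\end{eqnarray*}
so it suffices to evaluate $\int \pi(a|s)^2\,da$ at each fixed state $s$ and recognize it as $\sum_{i}^{K}\sum_{j}^{K} w_i(s)w_j(s)\,\mathcal{N}(\mu_i(s);\mu_j(s),\Sigma_i(s)+\Sigma_j(s))$.

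Next I would expand the square. Writing $\pi(a|s) = \sum_{i}^{K} w_{i}(s)\mathcal{N}(a;\mu_i(s),\Sigma_i(s))$ and using bilinearity, $\pi(a|s)^2 = \sum_{i}^{K}\sum_{j}^{K} w_i(s)w_j(s)\,\mathcal{N}(a;\mu_i(s),\Sigma_i(s))\mathcal{N}(a;\mu_j(s),\Sigma_j(s))$; because this is a finite sum, linearity of the integral lets me push $\int\!\cdot\,da$ inside. The whole computation then hinges on the product-of-Gaussians identity
\begin{eqnarray*}
\mathcal{N}(a;\mu_i,\Sigma_i)\,\mathcal{N}(a;\mu_j,\Sigma_j) = \mathcal{N}(\mu_i;\mu_j,\Sigma_i+\Sigma_j)\,\mathcal{N}(a;\mu_c,\Sigma_c),
\end{eqnarray*}
with $\Sigma_c = (\Sigma_i^{-1}+\Sigma_j^{-1})^{-1}$ and $\mu_c = \Sigma_c(\Sigma_i^{-1}\mu_i + \Sigma_j^{-1}\mu_j)$, obtained by completing the square in $a$ inside the product of the two quadratic exponents. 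Integrating over $a$ and using $\int \mathcal{N}(a;\mu_c,\Sigma_c)\,da = 1$ gives $\int \mathcal{N}(a;\mu_i,\Sigma_i)\mathcal{N}(a;\mu_j,\Sigma_j)\,da = \mathcal{N}(\mu_i;\mu_j,\Sigma_i+\Sigma_j)$; substituting this into the expanded square and pulling the $\rho_{\pi}(s)$-weighted sum back outside yields exactly \eqref{eqn:tsallis_mdn}.

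The only mildly technical point is the multivariate completion of the square, specifically checking that the constant factored out of the product is precisely $\mathcal{N}(\mu_i;\mu_j,\Sigma_i+\Sigma_j)$. This reduces to the determinant identity $\det(\Sigma_i)\det(\Sigma_j)\det(\Sigma_i^{-1}+\Sigma_j^{-1}) = \det(\Sigma_i+\Sigma_j)$ (which follows from $\Sigma_i^{-1}+\Sigma_j^{-1} = \Sigma_i^{-1}(\Sigma_i+\Sigma_j)\Sigma_j^{-1}$) together with the quadratic-form identity $\mu_i^{\intercal}\Sigma_i^{-1}\mu_i + \mu_j^{\intercal}\Sigma_j^{-1}\mu_j - \mu_c^{\intercal}\Sigma_c^{-1}\mu_c = (\mu_i-\mu_j)^{\intercal}(\Sigma_i+\Sigma_j)^{-1}(\mu_i-\mu_j)$; after that, everything is bookkeeping over the $K^2$ mixture pairs. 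It is worth remarking why the analogous derivation fails for the Gibbs--Shannon entropy: $-\int \pi(a|s)\log\pi(a|s)\,da$ contains $\log\sum_i w_i \mathcal{N}(a;\mu_i,\Sigma_i)$, and the logarithm does not split over the mixture components, so no closed form of this kind exists — which is the contrast the surrounding discussion depends on.
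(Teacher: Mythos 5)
Your proposal is correct and follows essentially the same route as the paper's proof: rewrite $W(\pi)$ via the state visitation measure as $\tfrac{1}{2}\sum_{s}\rho_{\pi}(s)\bigl(1-\int\pi(a|s)^{2}\,da\bigr)$, expand the square of the mixture, and apply the identity $\int\mathcal{N}(a;\mu_i,\Sigma_i)\,\mathcal{N}(a;\mu_j,\Sigma_j)\,da=\mathcal{N}(\mu_i;\mu_j,\Sigma_i+\Sigma_j)$. The only difference is that you verify the product-of-Gaussians identity explicitly (via completion of the square and the determinant identity), whereas the paper invokes it without proof.
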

The proof is included in the causal Tsallisrial.
The analytic form of the Tsallis entropy shows that the Tsallis
entropy is proportional to the distance between mixture means. 
Hence, maximizing the Tsallis entropy of a sparse MDN encourages
exploration of diverse directions during the policy optimization step
of MCTEIL. 
In imitation learning, the main benefit of the generative adversarial
setting is that the resulting policy is more robust than that of 
supervised learning since it can learn how to recover from a less
demonstrated region to a demonstrated region by exploring the
state-action space during training. 
Maximum Tsallis entropy of a sparse MDN encourages efficient
exploration by giving bonus rewards when mixture means are spread
out. 
(\ref{eqn:tsallis_mdn}) also has an effect of utilizing
mixtures more efficiently by penalizing for modeling a single mode using
several mixtures. 
Consequently, the Tsallis entropy $W(\pi)$ has clear benefits in terms
of both exploration and mixture utilization. 

\section{Experiments}

To verify the effectiveness of the proposed method, we compare MCTEIL
with several other imitation learning methods.
First, we use behavior cloning (BC) as a baseline.
Second, generative adversarial imitation learning (GAIL) with a single
Gaussian distribution is compared. 
While several variants of GAIL exist
\cite{baram2017end,li2017infogail}, they are all based on the maximum
causal entropy framework and utilize a single Gaussian distribution as
a policy function. 
Hence, we choose GAIL as the representative method.
We also compare a straightforward extension of GAIL for a multi-modal
policy by using a softmax weighted mixture density network (soft MDN)
in order to validate the efficiency of the proposed sparsemax weighted MDN.
In soft GAIL, due to the intractability of the causal entropy of a
mixture of Gaussians, we approximate the entropy term by adding 
$-\alpha\log(\pi(a_t|s_t))$ to $-\log(\mathbf{D}(s_t,a_t))$ 
since $\mathbb{E}_{\pi}\left[-\alpha\log(\mathbf{D}(s,a))\right] + \alpha H(\pi) = \mathbb{E}_{\pi}\left[-\log(\mathbf{D}(s,a)) - \alpha\log(\pi(a|s))\right]$.
The other related imitation learning methods for multi-modal task
learning, such as \cite{hausman2017multi,wang2017robust}, are
excluded from the comparison since they focus on the task level
multi-modality, where the multi-modality of demonstrations comes from
multiple different tasks.
In comparison, the proposed method captures the multi-modality of the
optimal policy for a single task.
We would like to note that our method can be extended to multi-modal
task learning as well.

\subsection{Multi-Goal Environment}


To validate that the proposed method can learn multi-modal behavior of
an expert, we design a simple multi-goal environment with four
attractors and four repulsors, where an agent tries to reach one of
attractors while avoiding all repulsors as shown in Figure
\ref{fig:env}. 
The agent follows the point-mass dynamics and get a positive reward
(resp., a negative reward) when getting closer to an attractor (resp.,
repulsor). 
Intuitively, this problem has multi-modal optimal actions at the center.
We first train the optimal policy using \cite{lee2018sparse} and
generate $300$ demonstrations from the expert's policy. 
For both soft GAIL and MCTEIL, $500$ episodes are sampled at each iteration.
In every iterations, we measure the average return using the underlying rewards and the
reachability which is measured by counting how many goals are reached. 
If the algorithm captures the multi-modality of expert's
demonstrations, then, the resulting policy will show high
reachability. 

The results are shown in Figure \ref{fig:ret} and \ref{fig:reach}.
Since the rewards are multi-modal, it is easy to get a high return if
the algorithm learns only uni-modal behavior. 
Hence, the average returns of soft GAIL and MCTEIL increases similarly.
However, when it comes to the reachability, MCTEIL outperforms soft GAIL
when they use the same number of mixtures. 
In particular, MCTEIL can learn all modes in demonstrations at the end
of learning while soft GAIL suffers from collapsing mixture means. 
This advantage clearly comes from the maximum Tsallis entropy of a sparse MDN 
since the analytic form of the Tsallis entropy directly
penalizes collapsed mixture means while $-\log(\pi(a|s))$ indirectly
prevents modes collapsing in soft GAIL.
Consequently, MCTEIL efficiently utilizes each mixture for wide-spread
exploration. 

\begin{figure}[!t]
\centering
\subfigure[Multi-Goal Environment]{ \label{fig:env}
  \centering
  \includegraphics[width=.32\textwidth]{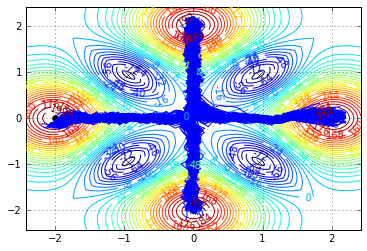}}
\subfigure[Average Return]{ \label{fig:ret}
  \centering
  \includegraphics[width=.32\textwidth]{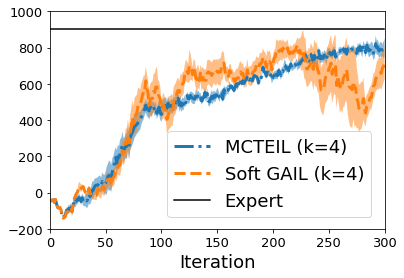}}
  \subfigure[Reachability]{ \label{fig:reach}
  \centering
  \includegraphics[width=.3\textwidth]{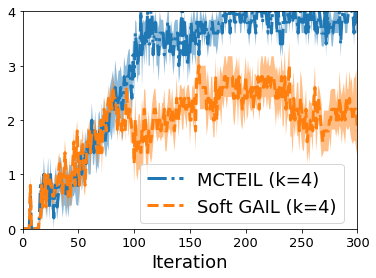}}
\caption{
(a) The environment and multi-modal demonstrations are shown. 
The contour shows the underlying reward map.
(b) The average return of MCTEIL and soft GAIL during training.
(c) The reachability of MCTEIL and soft GAIL during training, where
$k$ is the number of mixtures. 
}
\label{fig:multi_res}
\end{figure}

\subsection{Continuous Control Environment}

We test MCTEIL with a sparse MDN on MuJoCo \cite{todorov2012mujoco},
which is a physics-based simulator, using 
\textit{Halfcheetah, Walker2d, Reacher}, and \textit{Ant}. 
We train the expert policy distribution using trust region policy
optimization (TRPO) \cite{schulman2015trust} under the true reward
function and generate $50$ demonstrations from the expert policy. 
We run algorithms with varying numbers of demonstrations, $4, 11, 18,$
and $25$, and all experiments have been repeated three times with
different random seeds. 
To evaluate the performance of each algorithm, 
we sample $50$ episodes from the trained policy and measure the
average return value using the underlying rewards. 
For methods using an MDN, we use the best number of mixtures using a
brute force search. 

The results are shown in Figure \ref{fig:mujoco_res}.
For three problems, except Walker2d, MCTEIL outperforms the other
methods with respect to the average return as the number of
demonstrations increases. 
For Walker2d, MCTEIL and soft GAIL show similar performance.
Especially, in the reacher problem, we obtain the similar results reported in \cite{ho2016generative}, 
where BC works better than GAIL.
However, our method shows the best performance
for all demonstration counts.
It is observed that the MDN policy tends to show high performance
consistently since MCTEIL and soft GAIL are consistently ranked within the
top two high performing algorithms.
From these results, we can conclude that an MDN policy explores better
than a single Gaussian policy since an MDN can keep searching multiple
directions during training. 
In particular, since the maximum Tsallis entropy makes each mixture
mean explore in different directions and a sparsemax distribution
assigns zero weight to unnecessary mixture components, MCTEIL
efficiently explores and shows better performance compared to soft
GAIL with a soft MDN.
Consequently, we can conclude that MCTEIL outperforms other imitation
learning methods and the causal Tsallis entropy has benefits over the 
causal Gibbs-Shannon entropy as it encourages exploration more
efficiently. 

\begin{figure}[!t]
\centering
\includegraphics[width=\textwidth]{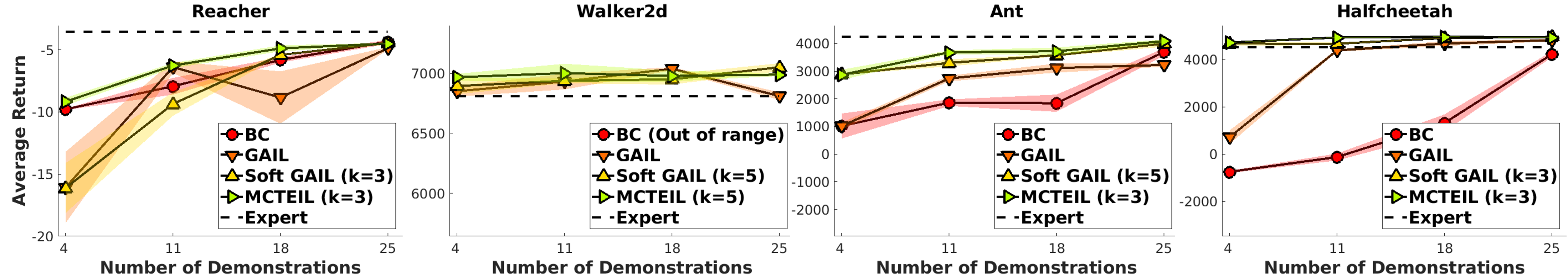}
\caption{
Average returns of trained policies.  
For soft GAIL and MCTEIL, $k$ indicates the number of mixture
and $\alpha$ is an entropy regularization coefficient.  
A dashed line indicates the performance of an expert.
}
\label{fig:mujoco_res}
\end{figure}

\section{Conclusion}
In this paper, we have proposed a novel maximum causal Tsallis entropy
(MCTE) framework, which induces a sparsemax distribution as the optimal
solution. 
We have also provided the full mathematical analysis of the proposed framework,
 including the concavity of the problem, the optimality
condition, and the interpretation as robust Bayes.
We have also developed the maximum causal Tsallis entropy imitation
learning (MCTEIL) algorithm, which can efficiently solve a MCTE
problem in a continuous action space since the Tsallis entropy of a
mixture of Gaussians encourages exploration and efficient mixture
utilization.  
In experiments, we have verified that the proposed method has
advantages over existing methods for learning the multi-modal behavior of
an expert since a sparse MDN can search in diverse directions
efficiently.  
Furthermore, the proposed method has outperformed BC,
GAIL, and GAIL with a soft MDN on the standard IL problems in the
MuJoCo environment. 
From the analysis and experiments, we have shown that the proposed
MCTEIL method is an efficient and principled way to learn the
multi-modal behavior of an expert.

\appendix

\section{Analysis}

\begin{proof}[Proof of \textnormal{Theorem} \ref{thm:obj_chg}]
The proof is simply done by checking two equalities.
First,
\begin{eqnarray*}
\begin{aligned}
W(\pi) &= \frac{1}{2}\mathbb{E}_{\pi}\left[1-\pi(a|s)\right] = \frac{1}{2}\sum_{s,a}\rho_{\pi}(s,a)\left(1-\pi(a|s)\right)\\
&= \frac{1}{2}\sum_{s,a}\rho_{\pi}(s,a)\left(1-\frac{\rho_{\pi}(s,a)}{\sum_{a'}\rho_{\pi}(s,a')}\right)
\end{aligned}
\end{eqnarray*}
and, second,
\begin{eqnarray*}
\begin{aligned}
\bar{W}(\rho) &= \frac{1}{2}\sum_{s,a} \rho(s,a) \left(1-\frac{\rho(s,a)}{\sum_{a'}\rho(s,a')}\right) 
= \frac{1}{2}\sum_{s,a}\rho_{\pi_{\rho}}(s,a)\left(1-\pi_{\rho}(a|s)\right)\\
&= W(\pi_{\rho}).
\end{aligned}
\end{eqnarray*}
\end{proof}

\subsection{Concavity of Maximum causal Tsallis Entropy}

\begin{proof}[Proof of \textnormal{Theorem} \ref{thm:obj_concave}]
Proof of concavity of $\bar{W}(\rho)$ is equivalent to show 
that following inequality is satisfied for all state $s$ and action $a$ pairs:
\begin{eqnarray*}
\begin{aligned}
&(\lambda_{1}\rho_{1}(s,a) + \lambda_{2}\rho_{2}(s,a)) \left( 1- \frac{\lambda_{1}\rho_{1}(s,a) + \lambda_{2}\rho_{2}(s,a)}{\lambda_{1}\sum_{a'} \rho_{1}(s,a') + \lambda_{2} \sum_{a'} \rho_{2}(s,a')} \right) \\
&\ge \lambda_{1} \rho_{1}(s,a) \left( 1 - \frac{\rho_{1}(s,a)}{\sum_{a'}\rho_{1}(s,a')} \right) + \lambda_{2} \rho_{2}(s,a) \left( 1 - \frac{\rho_{2}(s,a)}{\sum_{a'}\rho_{2}(s,a')}\right)
\end{aligned}
\end{eqnarray*}
where $\lambda_1 \geq 0$, $\lambda_2 \geq 0$, and $\lambda_1+\lambda_2 = 1$.
For notational simplicity, $\rho_{i}(s,a)$ and $\sum_{a'}\rho_{i}(s,a')$ are replaced with $a_{i}$ and $b_{i}$, respectively.
Then, the right-hand side is
\begin{eqnarray*}
\begin{aligned}
&\sum_{i = 1,2} \lambda_{i} a_{i} \left( 1 - \frac{a_{i}}{b_{i}}\right) = \sum_{i = 1,2} \lambda_{i} a_{i} \left(1 - \frac{ \lambda_{i} a_{i}}{\lambda_{i} b_{i}}\right) \\
&= \left(\sum_{j=1,2} \lambda_{j} b_{j}\right) \sum_{i = 1,2} \left[ \frac{ \lambda_{i} b_{i}}{\left( \sum_{j=1,2} \lambda_{j} b_{j}\right) } \frac{ \lambda_{i} a_{i}}{\lambda_{i} b_{i}} \left(1 - \frac{ \lambda_{i} a_{i}}{\lambda_{i} b_{i}}\right) \right].
\end{aligned}
\end{eqnarray*}
Let $F(x) = x(1 - x)$, which is a concave function. 
Then the above equation can be expressed as follows,
\begin{eqnarray*}
\begin{aligned}
\sum_{i = 1,2} \lambda_{i} a_{i} \left(1 - \frac{a_{i}}{b_{i}}\right) = \left(\sum_{j=1,2} \lambda_{j} b_{j}\right) \sum_{i = 1,2} \left[ \frac{ \lambda_{i} b_{i}}{\left( \sum_{j=1,2} \lambda_{j} b_{j}\right) } F \left(\frac{ \lambda_{i} a_{i}}{\lambda_{i} b_{i}}\right) \right].
\end{aligned}
\end{eqnarray*}
By using the property of concave function $F(x)$\footnote{
$\sum_{i} \mu_i F( x_i) \leq F(\sum_{i} \mu_i x_i)$, for some $(x_i,\dots,x_n)$ and $(\mu_i,\dots,\mu_n)$ such that $\mu_i \ge 0$ and $\sum_{i}\mu_i=1$.
}, we obtain the following inequality:
\begin{eqnarray*}
\begin{aligned}
& \left(\sum_{j=1,2} \lambda_{j} b_{j}\right) \sum_{i = 1,2} \left[ \frac{ \lambda_{i} b_{i}}{\left( \sum_{j=1,2} \lambda_{j} b_{j}\right) } F \left(\frac{ \lambda_{i} a_{i}}{\lambda_{i} b_{i}}\right) \right]\\
&\le \left(\sum_{j=1,2} \lambda_{j} b_{j}\right) F \left(\sum_{i = 1,2} \left[ \frac{ \lambda_{i} b_{i}}{\left( \sum_{j=1,2} \lambda_{j} b_{j}\right) } \frac{ \lambda_{i} a_{i}}{\lambda_{i} b_{i}}\right]\right)
=\left(\sum_{j=1,2} \lambda_{j} b_{j}\right) F \left(\frac{ \sum_{i = 1,2} \lambda_{i} a_{i}}{\sum_{j=1,2} \lambda_{j} b_{j}} \right)\\
&=\left(\sum_{j=1,2} \lambda_{j} b_{j}\right) \frac{ \sum_{i = 1,2} \lambda_{i} a_{i}}{\sum_{j=1,2} \lambda_{j} b_{j}} \left( 1 - \frac{ \sum_{i = 1,2} \lambda_{i} a_{i}}{\sum_{j=1,2} \lambda_{j} b_{j}} \right) =\sum_{i = 1,2} \lambda_{i} a_{i} \left( 1 - \frac{ \sum_{i = 1,2} \lambda_{i} a_{i}}{\sum_{j=1,2} \lambda_{j} b_{j}} \right).
\end{aligned}
\end{eqnarray*}
Finally, we have the following inequality for every state and action pair,
\begin{eqnarray*}
\begin{aligned}
&(\lambda_{1}\rho_{1}(s,a) + \lambda_{2}\rho_{2}(s,a)) \left( 1 - \frac{\lambda_{1}\rho_{1}(s,a) + \lambda_{2}\rho_{2}(s,a)}{\lambda_{1}\sum_{a'} \rho_{1}(s,a') + \lambda_{2} \sum_{a'} \rho_{2}(s,a')}\right) \\
&\ge \lambda_{1} \rho_{1}(s,a) \left(1 -  \frac{\rho_{1}(s,a)}{\sum_{a'}\rho_{1}(s,a')} \right) + \lambda_{2} \rho_{2}(s,a) \left( 1 - \frac{\rho_{2}(s,a)}{\sum_{a'}\rho_{2}(s,a')}\right),
\end{aligned}
\end{eqnarray*}
and, by summing up with respect to $s, a$, we get
\begin{eqnarray*}
\begin{aligned}
\bar{W}(\lambda_{1} \rho_{1} + \lambda_{2} \rho_{2}) \geq \lambda_{1} \bar{W}(\rho_{1}) + \lambda_{2} \bar{W}(\rho_{2}).
\end{aligned}
\end{eqnarray*}
Therefore, $\bar{W}(\rho)$ is a concave function.
\end{proof}

\subsection{Optimality Condition from Karush–Kuhn–Tucker (KKT) conditions}

The following proof explains the optimality condition of the maximum
causal Tsallis entropy problem and also tells us that the optimal
policy distribution has a sparse and multi-modal distribution. 

\begin{proof}[Proof of \textnormal{Theorem} \ref{thm:necessarily_cnd}]
These conditions are derived from the stationary condition of KKT,
where the derivative of $L_{W}$ is equal to zero, 
\begin{equation*}
\frac{\partial L_{W}}{\partial \rho(s,a)} = 0.
\end{equation*}
We first compute the derivative of $\bar{W}$ as follows:
\begin{equation*}
\frac{\partial \bar{W}}{\partial \rho(s,a)} = \frac{1}{2} - \frac{\rho(s,a)}{\sum_{a'} \rho(s,a')} + \frac{1}{2}\sum_{a'}\left(\frac{\rho(s,a')}{\sum_{a'}\rho(s,a')}\right)^2.
\end{equation*}
We also check the derivative of Bellman flow constraints as follows:
\begin{eqnarray*}
\begin{aligned}
\frac{\partial \sum_{s} c_{s} \left(\sum_{a'}\rho(s,a') - d(s) - \gamma \sum_{s',a'} T(s|s',a')\rho(s',a')\right)}{\partial \rho(s'',a'')} = c_{s''}-\gamma \sum_{s} c_{s} T(s|s'',a'').
\end{aligned}
\end{eqnarray*}
Hence, the stationary condition can be obtained as
\begin{eqnarray}\label{eqn:stationary_mcste}
\begin{aligned}
\frac{\partial L_{W}}{\partial \rho(s,a)}=&\alpha\left[-\frac{1}{2} + \frac{\rho(s,a)}{\sum_{a'} \rho(s,a')} - \frac{1}{2}\sum_{a'}\left(\frac{\rho(s,a')}{\sum_{a'}\rho(s,a')}\right)^2\right]-\theta^{\intercal} \phi(s,a) \\
&+ c_{s} - \gamma \sum_{s'}c_{s'}T(s'|s,a)-\lambda_{sa} = 0.
\end{aligned}
\end{eqnarray}
First, let us consider a positive $a \in S(s) = \{a|\rho(s,a) > 0\}$.
From the complementary slackness, the corresponding $\lambda_{sa}$ is zero.
By replacing $\frac{\rho(s,a)}{\sum_a' \rho(s,a')}$ with $\pi_{\rho}(a|s)$ and using the definition of $q_{sa}$,
the following equation is obtained from the stationary condition (\ref{eqn:stationary_mcste}).
\begin{eqnarray} \label{eqn:stationary}
\begin{aligned}
\pi(a|s) - \frac{q_{sa}}{\alpha} = \frac{1}{2} + \frac{1}{2}\sum_{a'}\left(\pi(a'|s)\right)^2 - \frac{c_{s}}{\alpha}.
\end{aligned}
\end{eqnarray}
It can be observed that the right hand side of the equation only depends on the state $s$ and is constant for the action $a$.
In this regards, 
by summing up with respect to the action with positive $\rho(s,a) > 0$,
$c_{s}$ is obtained as follows:
\begin{eqnarray*}
\begin{aligned}
1 - \sum_{a\in S(s)}\frac{q_{sa}}{\alpha} &= K \left( \frac{1}{2} + \frac{1}{2}\sum_{a'}\left(\pi(a'|s)\right)^2 - \frac{c_{s}}{\alpha} \right)\\
\frac{c_{s}}{\alpha} &=  \frac{1}{2} + \frac{1}{2}\sum_{a'}\left(\pi(a'|s)\right)^2 + \frac{\sum_{a\in S(s)}\frac{q_{sa}}{\alpha} - 1}{K},
\end{aligned}
\end{eqnarray*}
where $K$ is the number of actions with positive $\rho(s,a) > 0$.
By plug in $\frac{c_{s}}{\alpha}$ into (\ref{eqn:stationary}), we
obtain a policy as follows:
\begin{eqnarray*}
\begin{aligned}
\pi(a|s) = \frac{q_{sa}}{\alpha} - \left(\frac{\sum_{a\in S(s)}\frac{q_{sa}}{\alpha} - 1}{K}\right)
\end{aligned}
\end{eqnarray*}
Now, we define $\tau(\frac{q_{s}}{\alpha}) \triangleq \frac{\sum_{a\in S(s)}\frac{q_{sa}}{\alpha} - 1}{K}$, and,
interestingly, $\tau$ is the same as the threshold of a sparsemax
distribution \cite{martins2016softmax}. 
Then, we can obtain the optimality condition for the policy
distribution $\pi(a|s)$ as follows:
\begin{eqnarray*}
\begin{aligned}
\forall s,a \;\; \pi(a|s) = \max\left(\frac{q_{sa}}{\alpha} - \tau(s),0\right).
\end{aligned}
\end{eqnarray*}
where $\tau(s)$ indicates $\tau(\frac{q_{s}}{\alpha})$.

The Lagrangian multiplier $c_s$ can be found from $\pi$ as follows:
\begin{eqnarray*}
\begin{aligned}
\frac{c_{s}}{\alpha} &=  \frac{1}{2} + \frac{1}{2}\sum_{a'}\left(\pi(a'|s)\right)^2 + \tau(s)\\
 &=  \frac{1}{2} + \frac{1}{2}\sum_{a' \in S(s)}\left( \frac{q_{sa'}}{\alpha} - \tau(s) \right)^2 + \tau(s)\\
 &=  \frac{1}{2} + \frac{1}{2}\sum_{a' \in S(s)}\left( \frac{q_{sa'}}{\alpha} \right)^2 - \sum_{a' \in S(s)}\frac{q_{sa'}}{\alpha} \tau(s) + \frac{K}{2} \tau(s)^{2} + \tau(s)\\
 &=  \frac{1}{2} + \frac{1}{2}\sum_{a' \in S(s)}\left( \frac{q_{sa'}}{\alpha} \right)^2 - K \frac{ \sum_{a' \in S(s)}\frac{q_{sa'}}{\alpha}-1}{K} \tau(s) + \frac{K}{2} \tau(s)^{2} \\
 &=  \frac{1}{2} + \frac{1}{2}\sum_{a' \in S(s)}\left( \frac{q_{sa'}}{\alpha} \right)^2 - \frac{K}{2} \tau(s)^{2} \\
c_{s} &= \alpha\left[\frac{1}{2}\sum_{a\in S(s)}\left(\left(\frac{q_{sa}}{\alpha}\right)^{2} - \tau \left(\frac{q_{s}}{\alpha}\right)^{2}\right) + \frac{1}{2}\right].
\end{aligned}
\end{eqnarray*}

To summarize, we obtain the optimality condition of (\ref{eqn:maximum_cste_vis}) as follows:
\begin{eqnarray*}
\begin{aligned}
& q_{sa} \triangleq \theta^{\intercal}\phi(s,a) + \gamma
  \sum_{s'}c_{s'}T(s'|s,a), \\
& c_{s} = \alpha\left[\frac{1}{2}\sum_{a\in S(s)}\left(\left(\frac{q_{sa}}{\alpha}\right)^{2} - \tau \left(\frac{q_{s\cdot}}{\alpha}\right)^{2}\right) + \frac{1}{2}\right],\\
& \pi(a|s) = \max\left(\frac{q_{sa}}{\alpha} - \tau
\left(\frac{q_{s\cdot}}{\alpha}\right),0\right).
\end{aligned}
\end{eqnarray*}
\end{proof}

\subsection{Interpretation as Robust Bayes}

In this section, the connection between MCTE estimation and a minimax
game between a decision maker and the nature is explained. 
We prove that the proposed MCTE problem is equivalent to a minimax
game with the Brier score.

\begin{proof}[Proof of \textnormal{Theorem}\ref{thm:maximinprob}]
The objective function can be reformulated as 
\begin{eqnarray*}
\begin{aligned}
&\mathbb{E}_{\tilde{\pi}}\left[\sum_{a'}\frac{1}{2}\left(\mathbbm{1}_{\{a'=a\}} - \pi(a'|s)\right)^2\right]= \mathbb{E}_{\tilde{\pi}}\left[ B(s,a) \right] = \sum_{s,a} \rho_{\tilde{\pi}}(s,a) B(s,a)\\
&=\frac{1}{2}\sum_{s,a}\rho_{\tilde{\pi}}(s,a)\left(1-2\pi(a|s) + \sum_{a'}\pi(a'|s)^{2}\right),
\end{aligned}
\end{eqnarray*}
Hence, the objective function is quadratic with respect to $\pi(a|s)$ and is linear with respect to $\rho_{\tilde{\pi}}(s,a)$.
By using the one-to-one correspondence between $\tilde{\pi}$ and $\rho_{\tilde{\pi}}$,
we change the variable of inner maximization into the state action visitation.
After changing the optimization variable, by using the minimax theorem
\cite{millar1983minimax}, 
the minimization and maximization of the problem
(\ref{eqn:minmaxprob}) are interchangeable as follows: 
\begin{eqnarray*}
\begin{aligned}
\underset{\pi \in \Pi}{\min}\;\underset{\rho_{\tilde{\pi}}\in \mathbf{M}}{\max}\; \mathbb{E}_{\tilde{\pi}}\left[\sum_{a'}\frac{1}{2}\left(\mathbbm{1}_{\{a'=a\}} - \tilde{\pi}(a|s)\right)^2\right] \\
= \underset{\rho_{\tilde{\pi}}\in \mathbf{M}}{\max}\;\underset{\pi \in \Pi}{\min}\; \mathbb{E}_{\tilde{\pi}}\left[\sum_{a'}\frac{1}{2}\left(\mathbbm{1}_{\{a'=a\}} - \tilde{\pi}(a|s)\right)^2\right]
\end{aligned}
\end{eqnarray*}
where sum-to-one, positivity, and Bellman flow constraints are omitted here.
After converting the problem, the optimal solution of inner minimization with respect to $\pi$ is easily computed as $\pi=\tilde{\pi}$ using $\nabla_{\pi(a''|s'') } \mathbb{E}_{\tilde{\pi}}\left[ B(s,a) \right] = 0 $.
After applying $\pi=\tilde{\pi}$ and recovering the variables from $\rho_{\tilde{\pi}}$ to $\tilde{\pi}$, the problem (\ref{eqn:minmaxprob}) is converted into
\begin{eqnarray*}
\begin{aligned}
\underset{\tilde{\pi} \in \Pi}{\max}\;\frac{1}{2}\sum_{s}\rho_{\tilde{\pi}}(s)\left(  1 - \sum_{a}\tilde{\pi}(a|s)^2 \right) = \underset{\tilde{\pi} \in \Pi}{\max}\;W(\tilde{\pi}),
\end{aligned}
\end{eqnarray*}
which equals to the causal Tsallis entropy.
Hence, the problem (\ref{eqn:minmaxprob}) is equivalent to the maximum
causal Tsallis entropy problem. 
\end{proof}

\subsection{Generative Adversarial Setting with Maximum Causal Tsallis Entropy}

\begin{proof}[Proof of \textnormal{Theorem}\ref{thm:gail_setting}]
We first change the variable from $\pi$ to $\rho$ as follows:
\begin{eqnarray}
\begin{aligned}
& \underset{\theta}{\text{max}} \; \underset{\rho}{\text{min}}
& &  -\alpha \bar{W}(\rho)-\theta^{\intercal}\sum_{s,a}\rho(s,a)\phi(s,a) - \theta^{\intercal}\sum_{s,a}\rho_{E}(s,a)\phi(s,a) - \psi(\theta)\\
& \text{subject to}
& & \forall s,a,\;\sum_{s,a}\rho(s,a)\phi(s,a)  = \sum_{s,a}\rho_{E}(s,a)\phi(s,a),\\
&&& \rho(s,a) \geq 0,\;\;\sum_{a}\rho(s,a) = d(s) + \gamma \sum_{s',a'} T(s|s',a')\rho(s',a'),
\end{aligned}
\end{eqnarray}
where $\rho_{E}$ is $\rho_{\pi_{E}}$.
Let 
\begin{eqnarray}
\bar{L}(\rho,\theta) \triangleq  -\alpha \bar{W}(\rho) - \psi(\theta)-\theta^{\intercal}\sum_{s,a}\rho(s,a)\phi(s,a) + \theta^{\intercal}\sum_{s,a}\rho_{E}(s,a)\phi(s,a).
\end{eqnarray}

From Theorem \ref{thm:obj_concave}, $\bar{W}(\rho)$ is a concave function with respect to $\rho$ for a fixed $\theta$.
Hence, $\bar{L}(\rho,\theta)$ is also a concave function with respect to $\rho$ for a fixed $\theta$.
From the convexity of $\psi$, $\bar{L}(\rho,\theta)$ is a convex function with respect to $\theta$ for a fixed $\rho$.
Furthermore, the domain of $\rho$ is compact and convex and the domain of $\theta$ is convex.
Based on this property of $\bar{L}(\rho,\theta)$, we can use minimax duality \cite{millar1983minimax}:
\begin{eqnarray*}
\begin{aligned}
\underset{\theta}{\text{max}} \; \underset{\rho}{\text{min}}\;\; \bar{L}(\rho,\theta) = \underset{\rho}{\text{min}} \; \underset{\theta}{\text{max}}\;\;\bar{L}(\rho,\theta).
\end{aligned}
\end{eqnarray*}
Hence, the maximization and minimization are interchangable.
By using this fact,
we have:
\begin{eqnarray*}
\begin{aligned}
&\underset{\theta}{\text{max}} \; \underset{\rho}{\text{min}}\;\; \bar{L}(\rho,\theta) = \underset{\rho}{\text{min}} \; \underset{\theta}{\text{max}}\;\;\bar{L}(\rho,\theta)\\
& = \underset{\rho}{\text{min}} \;\;  -\alpha \bar{W}(\rho) + \underset{\theta}{\text{max}} \left(- \psi(\theta)+\theta^{\intercal}\sum_{s,a}\left(\rho(s,a) - \rho_{E}(s,a)\right)\phi(s,a) \right)\\
& = \underset{\rho}{\text{min}} \;\;  -\alpha \bar{W}(\rho) + \psi^{*} \left(\sum_{s,a}\left(\rho(s,a) - \rho_{E}(s,a)\right)\phi(s,a) \right) \\
& =  \underset{\pi}{\textnormal{min}}\;\; \psi^{*}\left(\mathbb{E}_{\pi}\left[\phi(s,a)\right] -  \mathbb{E}_{\pi_{E}}\left[\phi(s,a)\right]\right) - \alpha W(\pi)
\end{aligned}
\end{eqnarray*}
\end{proof}

\subsection{Tsallis Entropy of a Mixture of Gaussians}

\begin{proof}[Proof of \textnormal{Theorem}\ref{thm:analytic_mdn}]
The causal Tsallis entropy of a mixture of Gaussian distribution can be obtained as follows:
\begin{eqnarray}
\begin{aligned}
&W(\pi) = \frac{1}{2}\sum_{s} \rho_{\pi}(s) \left( 1 - \int_{\mathcal{A}} \pi(a|s)^2 \mathbf{d}a \right) \\
&= \frac{1}{2}\sum_{s} \rho_{\pi}(s) \left( 1 - \int_{\mathcal{A}} \left(\sum_{i}^{K} w_{i}( s ) \mathcal{N}\left( a ; \mu_i (s) , \Sigma_i (s) \right)\right)^2 \mathbf{d}a \right)\\
&= \frac{1}{2}\sum_{s} \rho_{\pi}(s) \left( 1 - \sum_{i}^{K}\sum_{j}^{K}w_{i}( s )w_{j}( s )\int_{\mathcal{A}} \mathcal{N}\left( a ; \mu_i (s) , \Sigma_i (s) \right)\mathcal{N}\left( a ; \mu_j (s) , \Sigma_j (s) \right) \mathbf{d}a \right)\\
&= \frac{1}{2}\sum_{s} \rho_{\pi}(s) \left( 1 - \sum_{i}^{K}\sum_{j}^{K}w_{i}( s )w_{j}( s )\mathcal{N}\left( \mu_i (s) ; \mu_j (s) , \Sigma_i (s) + \Sigma_j (s) \right)\right)
\end{aligned}
\end{eqnarray}
\end{proof}

\section{Causal Entropy Approximation}

In our implementation of maximum causal Tsallis entropy imitation learning (MCTEIL),
we approximate $W(\pi)$ using sampled trajectories as follows:
\begin{eqnarray} \label{eqn:tse_imple}
\begin{aligned}
W(\pi) &= \mathbbm{E}_{\pi}\left[ \frac{1}{2}\left(1-\pi(a|s)\right) \right] \approx \frac{1}{N}\sum_{i=0}^{N}\sum_{t=0}^{T_{i}}\frac{\gamma^{t}}{2}\left( 1 - \int_{\mathcal{A}} \pi(a|s_{i,t})^2 \mathbf{d}a \right),
\end{aligned}
\end{eqnarray}
where $\{(s_{i,t},a_{i,t})_{t=0}^{T_i}\}_{i=0}^{N}$ are $N$
trajectories and $T_{i}$ is the length of the $i$th trajectory.
Since the integral part of (\ref{eqn:tse_imple}) is analytically computed by Theorem \ref{thm:analytic_mdn},
there is no additional computational cost.
We have also tested the following approximation:
\begin{eqnarray*}
\begin{aligned}
W(\pi) &= \mathbbm{E}_{\pi}\left[ \frac{1}{2}\left(1-\pi(a|s)\right) \right] \approx \frac{1}{N}\sum_{i=0}^{N}\sum_{t=0}^{T_{i}}\frac{\gamma^{t}}{2}\left(1-\pi(a_{i,t}|s_{i,t})\right).
\end{aligned}
\end{eqnarray*}
However, this approximation has performed poorly compared to (\ref{eqn:tse_imple}).

For soft GAIL, $H(\pi)$ is approximated as the sum of discounted likelihoods
\begin{eqnarray*}
\begin{aligned}
H(\pi) &= \mathbbm{E}_{\pi}\left[ -\log\left(\pi(a|s)\right) \right] 	\approx \frac{1}{N}\sum_{i=0}^{N}\sum_{t=0}^{T_{i}}-\gamma^{t}\log\left(\pi(a_{i,t}|s_{i,t})\right).
\end{aligned}
\end{eqnarray*}
Note that the same approximation (\ref{eqn:tse_imple}) of $W(\pi)$ is
not available for $H(\pi)$  
since $-\int_{\mathcal{A}} \pi(a|s) \log\left(\pi(a|s)\right)\mathbf{d}a$ is intractable 
when we model $\pi(a|s)$ as a mixture of Gaussians.

\section{Additional Experimental Results}

In the multi-goal environment, the experimental results with other
hyperparameters are shown in Figure \ref{fig:full_multi_res}. 

\begin{figure}[!h]
\centering
\subfigure[Average Return]{
  \centering
  \includegraphics[width=.4\textwidth]{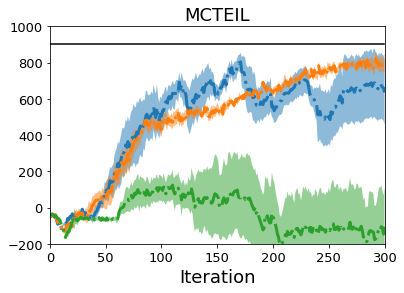}}
  \subfigure[Reachability]{
  \centering
  \includegraphics[width=.5\textwidth]{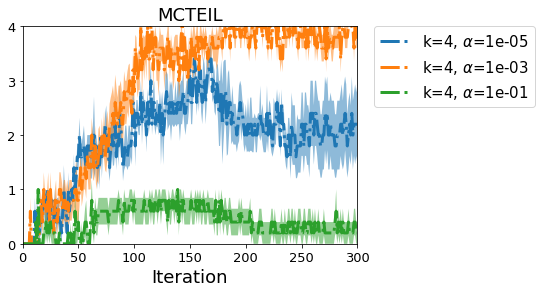}}
\subfigure[Average Return]{
  \centering
  \includegraphics[width=.4\textwidth]{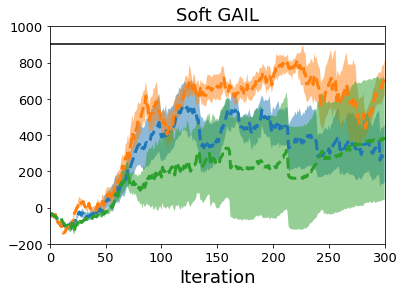}}
  \subfigure[Reachability]{
  \centering
  \includegraphics[width=.5\textwidth]{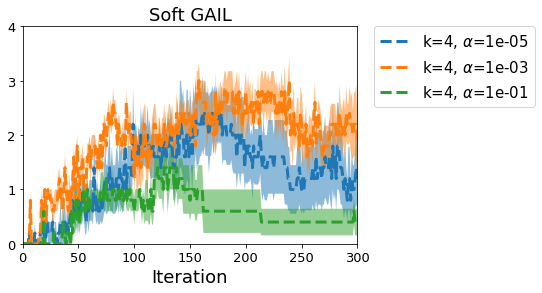}}
\caption{
(a) and (b) show the average return and reachability of MCTEIL, respectively.
(c) and (d) show the average return and reachability of soft GAIL, respectively.
$k$ indicates the number of mixtures and $\alpha$ indicates an entropy regularization coefficient.
}
\label{fig:full_multi_res}
\end{figure}

\bibliographystyle{IEEEtran}
\bibliography{bib_mcteil}
\end{document}